\theoremstyle{plain}
\newtheorem{theorem}{Theorem}[section]
\newtheorem{proposition}[theorem]{Proposition}
\newtheorem{lemma}[theorem]{Lemma}
\theoremstyle{definition}
\newtheorem{definition}[theorem]{Definition}
\theoremstyle{remark}
\newtheorem{remark}[theorem]{Remark}
\newtheorem{interpretation}{Mechanistic Interpretation}
\newtheorem*{lemma*}{Lemma}      
\definecolor{sns_blue}{HTML}{4c72b0}
\definecolor{sns_orange}{HTML}{dd8452}
\definecolor{sns_green}{HTML}{55a868}
\definecolor{sns_purple}{HTML}{9467bd}
\definecolor{sns_red}{HTML}{d62728}
\newcommand{\cmark}{\textcolor{green!60!black}{\ding{52}}}
\newcommand{\xmark}{\textcolor{red!80!black}{\ding{56}}}
\NewDocumentCommand{\rot}{O{90} O{1em} m}{\makebox[#2][l]
{\rotatebox{#1}{#3}}}
\icmltitlerunning{Time series saliency maps: Explaining models across multiple domains}
\begin{document}

\twocolumn[
  \icmltitle{Time series saliency maps: Explaining models across multiple domains}



  \icmlsetsymbol{equal}{*}

  \begin{icmlauthorlist}
    \icmlauthor{Christodoulos Kechris}{eslepfl}
    \icmlauthor{Jonathan Dan}{eslepfl}
    \icmlauthor{David Atienza}{eslepfl}
  \end{icmlauthorlist}

  \icmlaffiliation{eslepfl}{EPFL, Lausanne, Switzerland}

  \icmlcorrespondingauthor{Christodoulos Kechris}{christodoulos.kechris@epfl.ch}

  \icmlkeywords{Machine Learning, ICML}

  \vskip 0.3in
]



\printAffiliationsAndNotice{}  

\begin{abstract}
    Traditional saliency map methods, popularized in computer vision, highlight individual points (pixels) of the input that contribute the most to the model's output. However, in time series, they offer limited insights, as semantically meaningful features are often found in other domains. We introduce Cross-domain Integrated Gradients, a generalization of Integrated Gradients. Our method enables feature attributions in any domain that can be formulated as an invertible, differentiable transformation of the time domain. Crucially, our derivation extends the original Integrated Gradients into the complex domain, enabling frequency-based attributions. We provide the necessary theoretical guarantees, namely, path independence and completeness. We validate our method via controlled experiments with mechanistic analysis, quantitative faithfulness tests, and real-world case studies. Our approach reveals interpretable, problem-specific attributions that time-domain methods cannot capture in three real-world tasks across a variety of model architectures, machine-learning tasks, and cross-domain transforms: frequency-based attribution for a regression task in wearable heart rate extraction, independent component analysis in a classification task for electroencephalography-based seizure detection, and seasonal-trend decomposition for a forecasting problem with a zero-shot time-series foundation model. We release an open-source TensorFlow/PyTorch library to enable plug-and-play cross-domain explainability for time-series models. These results demonstrate the ability of Cross-Domain Integrated Gradients to provide semantically meaningful insights into time-series models that are impossible to achieve with traditional saliency in the time domain.
\end{abstract}

\section{Introduction}
Saliency maps attribute a model's prediction to individual input features \cite{selvaraju2017grad, gupta2022new}. In domains such as vision and language, these features often align with human-interpretable units (pixels or words), making saliency maps intuitive to inspect \cite{li2016visualizing}.

For time series, this alignment is weaker: neighboring time points do not necessarily correspond to coherent concepts, and predictive factors frequently manifest as \emph{structured latent features} such as frequency components or independent sources \cite{schroder2023post, schroder2023latent}. As a result, highlighting individual time points can be difficult to interpret and may obscure the mechanisms driving a model's decision.

Signal processing has long addressed this by interpreting signals via structured decompositions: a transform $T$ maps the time series to components $z = T(x)$ whose coordinates correspond to semantically meaningful factors (e.g., sinusoidal frequencies in the Fourier transform \cite{bracewell1989fourier} or statistically independent sources in ICA \cite{lee1998independent}). The appropriate decomposition is task- and signal-dependent, and choosing $T$ effectively specifies what kinds of features are interpretable.

Schröder et al. \cite{schroder2023post, schroder2023latent} demonstrate that time domain saliency can fail when labels depend on latent structure, e.g., frequency content, motivating explanations in an interpretable representation rather than only over time points. We build on this insight and treat the explanation domain as a task-dependent design choice: saliency methods should be able to attribute predictions to features in a practitioner-chosen domain, even when the model operates on time points.

To this end, we introduce \emph{Cross-domain Integrated Gradients}, which produces saliency maps directly in a practitioner-chosen explanation domain, e.g., frequency, sources, trend/seasonality, even when the model operates on time samples. 

This work builds on a growing line of research that seeks explanations in semantically meaningful representations rather than only over time points. For example, Virtual Inspection Layers (VIL) transport time-domain attributions to the frequency/time-frequency domain through a transform layer using transform-specific propagation rules \cite{vielhaben2024explainable}. MIX computes Integrated Gradients \cite{sundararajan2017axiomatic} in wavelet view space within a multi-view framework \cite{tran2025mix}.

We take the view that the \emph{explanation domain is part of the interpretability task}: choosing a transform $T$ defines the features $z = T(x)$ that are meaningful to inspect for a given application. Building on transformed-coordinate attribution, we provide a transform-agnostic toolbox across multiple decompositions and a principled extension of IG to \emph{complex-valued} transform domains (e.g. Fourier, Complex Cepstrum), with axiomatic guarantees enabling faithful attributions in the chosen domain. We validate our method via controlled mechanistic analysis, quantitative faithfulness tests, and real-world case studies, and we release an open-source implementation. Table \ref{tab:transform_coverage_matrix} summarizes the transform-domain coverage of representative time-series explanation methods and situates Cross-Domain IG (CDIG) in this landscape.

\begin{table}[h]
  \centering
  \footnotesize
  \begin{tabular}{lccccccc}
    \textbf{Method} &
    \rot{Time} &
    \rot{DFT} &
    \rot{STFT} &
    \rot{DWT} &
    \rot{ICA} &
    \rot{STL} &
    \rot{Cep.} \\
    \midrule
    Time-IG & \cmark &\xmark & \xmark & \xmark & \xmark & \xmark & \xmark \\
    TIMING & \cmark & \xmark & \xmark & \xmark & \xmark & \xmark & \xmark \\
    IG/LRP + VIL & \xmark & \cmark & \cmark & \xmark & \xmark & \xmark & \xmark \\
    FreqRISE & \xmark & \cmark & \cmark & \xmark & \xmark & \xmark & \xmark \\
    FlexTIME & \xmark & \cmark & \xmark & \xmark & \xmark & \xmark & \xmark \\
    MIX & \xmark & \xmark & \xmark & \cmark & \xmark & \xmark & \xmark \\
    \midrule
    \textbf{CDIG (ours)} & \cmark & \cmark & \cmark & \cmark & \cmark & \cmark & \cmark\\
    \bottomrule
  \end{tabular}
  \caption{Transform-domain coverage of representative time-series explanation methods. In this work, we instantiate Cross-domain IG in six transform families (DFT, STFT, DWT, ICA, STL, Cepstrum), whereas prior methods typically target specific domains.}
  \label{tab:transform_coverage_matrix}
\end{table}

In this work, we introduce the following contributions:
\begin{itemize}
    \item \textbf{Cross-domain IG framework}. We operationalize the transformed-coordinate IG as a unified framework for producing saliency maps \emph{in the chosen explanation domain } and instantiate it across six practically relevant transforms (DFT, STFT, DWT, ICA, STL, and Complex Cepstrum). 
    \item \textbf{Complex-valued IG with guarantees}. We derive a generalization of the Integrated Gradients for real-valued functions with a complex domain, enabling principled attributions via complex-valued transforms while preserving IG-style axiomatic properties (e.g., completeness). 
    \item \textbf{Domain choice is consequential}. We treat the choice of explanation domain as part of the interpretability task. We demonstrate how different domains allow for a better understanding of model behavior on time-series data. We also show that different transforms yield different quantitative behavior. 
    \item \textbf{Open-source library}. We provide TensorFlow/PyTorch library for cross-domain time series explainability: \url{https://github.com/esl-epfl/cross-domain-saliency-maps}. The code for reproducing the results of this paper is available here: \url{https://github.com/esl-epfl/cross-domain-saliency-maps-paper}.
\end{itemize}

\section{Related work}
\label{sec:related_works}

\textbf{Time-domain explainability.} Saliency map methods have been applied to time series applications, either by direct application of computer vision-derived methods \cite{jahmunah2022explainable, tao2024ecg} or by developing dedicated time series saliency approaches \cite{queen2023encoding, liu2024timex++}. Similarly, \cite{jang2025timing} proposed a time-series adaptation of Integrated Gradients (IG) \cite{sundararajan2017axiomatic} for time-domain attributions. To streamline comparisons between time-domain interpretability, \cite{ismail2020benchmarking} proposed an extensive synthetic, multi-channel benchmark. In all cases, these approaches focus on identifying significant regions of the time-domain input that contribute the most to the model's output. Such regions of interest are events that trigger the model's output. 

\textbf{Cross-domain interpretability.} Time-domain saliency can fail when labels depend on latent structure \cite{schroder2023post, schroder2023latent, theissler2022explainable, chung2024time}, motivating explanations in semantically meaningful representations. First, representation-restricted post-hoc methods define relevance directly in frequency or time-frequency representations via perturbation or masking \cite{chung2024time, brusch2024freqrise, brusch2025flextime}. Second, Virtual Inspection Layers (VIL) \cite{vielhaben2024explainable} \emph{transport} time-domain saliency into frequency/time-frequency domains by inserting a transform layer and propagating relevance using transform-specific rules. Third, transformed-coordinate attribution computes IG in a transformed/view space. MIX \cite{tran2025mix} instantiates this idea in a multi-view Haar-DWT setting with segment aggregation and cross-view refinement/selection. 

We build on transformed-coordinate attribution to support multiple transforms, including complex-valued domains (e.g., Complex Cepstrum).

\textbf{Saliency map evaluation.}
Evaluating saliency maps is not a trivial task. A major challenge lies in disentangling saliency map errors from model errors \cite{kim2021sanity, akhavan2023blame}, complicating validation through comparison with ground truth saliency. \cite{sundararajan2017axiomatic} proposes solving this by relying on a set of desirable axioms, bypassing the necessity of empirical evaluations. Validation based on insertion / deletion is another approach \cite{hama2023deletion, ismail2020benchmarking}. These methods empirically evaluate the effect of removing/retaining the most important input features, reinforcing trust in the saliency map method under examination. 

\section{Preliminaries}

\subsection{Problem statement and motivation}
\label{sec:problem_statement_and_motivation}
We consider a function $f: \mathcal{D}_s \rightarrow \mathbb{R}$ representing a deep learning model. The input $\boldsymbol{x} \in \mathcal{D}_s$ is constructed from a continuous-time signal $x(t) \in \mathbb{R}$ after discretizing it at a sampling frequency $f_s$~[Hz] and considering a window of length $L$ seconds: $\boldsymbol{x} = [x_0, ..., x_{n-1}]$, $n = f_s \cdot L$. Now consider a transform $T: \mathcal{D}_S \rightarrow \mathcal{D}_T$ that maps the original time domain to a semantically rich explanation target domain $\mathcal{D}_T$. Our task is to construct an informative saliency map that assigns a significance score to each characteristic $z_i = T(\boldsymbol{x})_i$ in the explanation domain. Furthermore, time series transforms can also be complex-valued, e.g. Fourier, our saliency map method should support $z_i \in \mathbb{C}$.

Saliency maps developed in computer vision applications, and in particular IG, provide explanations in the same domain as the model's input, that is, $\mathcal{D}_T = \mathcal{D}_S$. Applying these methods to time-series models results in maps expressed in the time domain. 

We summarise the empirical conclusions of relevant works \cite{schroder2023post, schroder2023latent, theissler2022explainable, vielhaben2024explainable}, in Proposition \ref{prop:target_domain_usefullness}.

\begin{proposition}
\label{prop:target_domain_usefullness}
    The time domain is not always informative in explaining $f$.
\end{proposition}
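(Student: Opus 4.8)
The statement is existential (``not always''), so the plan is to exhibit a single model $f$ whose decision is governed by a property that lives naturally in a transform domain, and then to show that the time-domain saliency map fails to convey that property. The cleanest candidate is a spectral-power detector: fix a target frequency $\omega_0 = 2\pi m_0 / n$ (a DFT bin) and set
\begin{equation}
f(\boldsymbol{x}) = \Bigl(\sum_{k=0}^{n-1} x_k \cos(\omega_0 k)\Bigr)^2 + \Bigl(\sum_{k=0}^{n-1} x_k \sin(\omega_0 k)\Bigr)^2 = \boldsymbol{x}^\top A \boldsymbol{x},
\end{equation}
with $A = \boldsymbol{c}\boldsymbol{c}^\top + \boldsymbol{s}\boldsymbol{s}^\top$, $c_k = \cos(\omega_0 k)$, $s_k = \sin(\omega_0 k)$. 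This $f$ outputs large values exactly when a tone at $\omega_0$ is present, so the single semantically meaningful feature driving the output is ``energy at $\omega_0$.''

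Next I would compute the time-domain IG attribution against the zero baseline along the straight-line path. Because $f$ is a quadratic form, the per-coordinate attribution is available in closed form:
\begin{equation}
\mathrm{IG}_i(\boldsymbol{x}) = x_i\,(A\boldsymbol{x})_i = x_i\bigl[c_i\,(\boldsymbol{c}^\top\boldsymbol{x}) + s_i\,(\boldsymbol{s}^\top\boldsymbol{x})\bigr],
\end{equation}
and one checks $\sum_i \mathrm{IG}_i(\boldsymbol{x}) = \boldsymbol{x}^\top A \boldsymbol{x} = f(\boldsymbol{x})$, so the map is faithful in the sense of the completeness property established later. The point I would then make is that, although faithful, this per-sample map is spread over all $n$ time indices with an oscillatory sign pattern inherited from $c_i$, $s_i$ and the signal itself; it nowhere singles out the frequency $\omega_0$ that actually controls the output. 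A reader of the time-domain map sees ``importance everywhere'' rather than the concept ``a tone at $\omega_0$.''

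To make ``not informative'' precise rather than merely visual, I would append a symmetry argument. The model $f$ is invariant under a circular time shift $\boldsymbol{x} \mapsto \boldsymbol{x}'$, which leaves the magnitude at bin $\omega_0$ unchanged, so $f(\boldsymbol{x}) = f(\boldsymbol{x}')$; yet $\mathrm{IG}_i(\boldsymbol{x})$ and $\mathrm{IG}_i(\boldsymbol{x}')$ differ and relocate the indices deemed ``important.'' Thus the time-domain attribution assigns significance in a way that depends on an explanatorily irrelevant phase, whereas the frequency-domain attribution developed in the sequel places all importance on the single bin at $\omega_0$. This contrast is exactly what the proposition asserts.

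The main obstacle I anticipate is definitional rather than computational: \emph{informative} has no a priori formal meaning, so the real work is fixing an operational criterion that is both defensible and decisive. I would commit to the two-pronged criterion used above, namely that a useful map should (i) localize the feature responsible for the output and (ii) respect the invariances of the model, and then argue that the spectral-power example violates both in the time domain while satisfying both in the transform domain. The quadratic-form computation and the completeness check are routine; the care lies in phrasing the criterion so the conclusion is a genuine consequence and not a restatement of the hypothesis.
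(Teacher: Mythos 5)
Your proposal is sound and establishes the proposition, but it takes a genuinely different route from the paper. The paper argues by a synthetic two-class example: inputs $\cos(2\pi\xi t+\phi)$ with class-dependent frequency, and a hand-crafted CNN $f(\boldsymbol{x})=\mathrm{AvgPool}(\mathrm{ReLU}(\boldsymbol{w}*\boldsymbol{x}))$ with one low-pass and one high-pass channel, so that a full mechanistic interpretation is available; it then shows empirically (Figure~1) that the time-domain IG only highlights points periodically without revealing how they drive the output, while the frequency-domain IG matches the mechanistic story, and it defers the analytic justification to Section~4.2, where $IG^{\mathbb{C}^n}_j=a_jb_j/\pi$ links frequency attributions to the filter gain. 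You instead pick a quadratic spectral-power detector $f(\boldsymbol{x})=\boldsymbol{x}^\top A\boldsymbol{x}$ with $A=\boldsymbol{c}\boldsymbol{c}^\top+\boldsymbol{s}\boldsymbol{s}^\top$, which buys you a fully closed-form time-domain attribution $\mathrm{IG}_i=x_i(A\boldsymbol{x})_i$ (correct, including the completeness check), and you add an explicit operational criterion for ``informative'' (localization of the governing feature plus respect for model invariances), which the paper never formalizes. Two caveats on your version: since $A_{kl}=\cos(\omega_0(k-l))$ is circulant, the time-domain IG map is exactly shift-equivariant --- it relocates \emph{together with} the input --- so the invariance prong is weaker than stated (one can defend covariance with the input as natural; the sharper point is that the highlighted indices encode phase, which is irrelevant to the decision variable); and your claim that the frequency-domain attribution concentrates on the $\omega_0$ bin is asserted rather than computed, though it follows by the same short Wirtinger computation the paper carries out for its CNN in Section~4.2 (with the usual care about the conjugate-symmetric bin at $n-m_0$ for real inputs). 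With those two points tightened, your argument is a more self-contained, analytic counterpart to the paper's mechanistic-plus-empirical demonstration.
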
 

We provide further analytically tractable evidence in support of Proposition \ref{prop:target_domain_usefullness} through our example in Section \ref{sec:time_domain_explanation_limitations} which is in line with the empirical synthetic experiments of \cite{schroder2023post}. Although this example focuses on the frequency domain, our derivation is transform-agnostic (Section \ref{sec:methods}) and we expand to more domains in Section \ref{sec:experiments}, providing real-world cases. 

\subsection{Time domain explanation limitations}
\label{sec:time_domain_explanation_limitations}
Consider that the input $\boldsymbol{x}$ is sampled from the signals $x(t) = cos(2 \pi \xi t + \phi)$. In this setup, there are two classes of samples depending on the oscillating frequency $\xi$: we set $y = 1$ if $\xi \sim \mathcal{N}(1, 0.5)$ and $y = 2$ if $\xi \sim \mathcal{N}(4, 0.5)$.We design a classifier $f$ to distinguish between these two classes. We opt to manually construct $f$ so that we have full mechanistic understanding of its inner workings. We choose a CNN architecture composed of a single convolutional layer with two channels followed by a ReLU activation and global average pooling $f(\boldsymbol{x}) = AvgPool\left(ReLU(\boldsymbol{w} * \boldsymbol{x})\right)$. The kernel of the first channel is a low-pass filter (cutoff at $2.5Hz$), while the second channel kernel is a high-pass filter with the same cutoff (see Figure \ref{fig:preliminaries_figures}).

Ideally, the model should be fully explained by describing its inner mechanisms. In this particular scenario, we have designed $f$ for this purpose; hence, a formal detailed explanation is available.

\begin{interpretation}
\label{interp:single_conv_layer_interpretation}
Convolutional channel $i$ allows only frequencies of class $i$ to pass through the output; otherwise, the channel's output is almost zero, not activating. The ReLU and Average Pooling mechanism extract the amplitude of the signal \cite{kechris2024dc}. Hence, the channel $i$ of the model output is only active when samples from class $i$ are processed, leading to the correct classification of the input.
\end{interpretation}

That depth of model understanding is not easily available in larger models, which have been trained on samples. Hence, saliency maps are often used as a proxy. We provide IG explanations of the model $f$ for samples from both classes, expressed in the time and frequency domains (Figure \ref{fig:preliminaries_figures}). Although time points are periodically highlighted as \textit{more important}, it is not exactly clear how this input influences the model towards producing its output. 

\begin{figure}[h]
    \centering
    \includegraphics[scale=0.7]{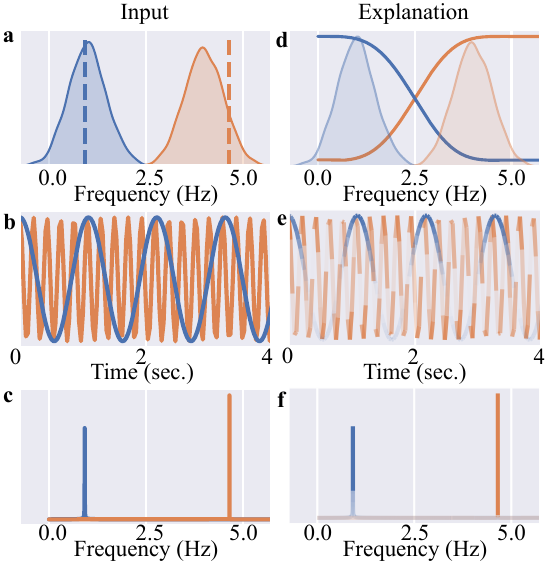}
    \caption{\textbf{Mechanistic interpretation along with Time and Frequency domain saliency maps.} \textbf{(a)} Distributions of the main frequency, $\xi$, for classes \textbf{\textcolor{sns_blue}{one}} and  \textbf{\textcolor{sns_orange}{two}}. We sample one input for each class (vertical dashed lines) for which we generate the saliency maps. \textbf{(b)} These two sampled inputs presented in the time and \textbf{(c)} frequency domains. \textbf{(d)} Illustration of the Mechanistic Interpretation\ref{interp:single_conv_layer_interpretation}. We plot the frequency response for the  \textbf{\textcolor{sns_blue}{first}} and  \textbf{\textcolor{sns_orange}{second}} channels of the CNN. The sample distributions (a) are also overlayed. \textbf{(e)} Saliency maps expressed in the time and \textbf{(f)} frequency domains.}
    \label{fig:preliminaries_figures}
\end{figure}

In contrast, a saliency map expressed in the frequency domain, which we introduce in Section \ref{sec:methods}, highlights the frequency components that contribute to the final output: for the samples of class one, only the 1~Hz component contributes to the model's output, and accordingly, for class two, the 4~Hz component. Here, this saliency map is much more interpretable. It provides useful information and better aligns with the mechanistic understanding (Mechanistic Interpretation \ref{interp:single_conv_layer_interpretation}) of this model. In Section \ref{sec:methods}, we show analytically that the frequency-expressed IG, for the data distribution and model of this example, is directly linked to its mechanistic explanation.

\subsection{Integrated Gradients}
\label{sec:preliminaries_integrated_gradients}
To explain the output of a model $f$ on an input $\boldsymbol{x}$ with a baseline $\hat{\boldsymbol{x}} \in \mathbb{R}^n$, IG generates a saliency map as \cite{sundararajan2017axiomatic}:

\begin{equation}
\label{eq:integrated_gradients_original}
    IG_i(\boldsymbol{x}) = (x_i  - \hat{x}_i)\int_{0}^{1} \frac{\partial f}{\partial x_i}\bigg|_{\hat{\boldsymbol{x}} + t \cdot(\boldsymbol{x} - \hat{\boldsymbol{x}})}dt
\end{equation} with each element $IG_i(x)$ of the map corresponding to the significance of the input feature $x_i$: saliency is expressed in the same domain as the input. The IG definition relies on two key points from the theory of integrals over differential forms: the line integral definition and Stokes' theorem. 

\paragraph{Line integral definition.} The IG can be derived from the definition of the integral of the differential form $df$ along the line $\boldsymbol{\gamma}(t) = \hat{\boldsymbol{x}} + t(\boldsymbol{x} - \hat{\boldsymbol{x}})$:
\begin{align}
    \int_{\gamma} df &= \int \boldsymbol{\gamma}^* df = \int_0^1 \sum_{i = 0}^N \frac{\partial f}{\partial x_i} \gamma_i^{\prime}(t)dt \nonumber\\ 
    &= \sum_{i = 0}^N \int_0^1 \frac{\partial f}{\partial x_i} \gamma_i^{\prime}(t)dt = \sum_{i = 0}^N (x_i - \hat{x}_i)\int_0^1 \frac{\partial f}{\partial x_i}dt \label{eq:original_ig_derivation}
\end{align} where $\boldsymbol{\gamma}^* df$ is the pullback of $df$ by $\boldsymbol{\gamma}$: $\boldsymbol{\gamma}^* df = \sum_{i = 0}^N \frac{\partial f}{\partial x_i} \gamma_i^{\prime}(t)dt$ \cite{do1998differential}. Each individual element of the IG map $IG_i(\boldsymbol{x})$ corresponds to each element of the last sum of eq. \ref{eq:original_ig_derivation}.

\paragraph{Stoke's Theorem.}The \textit{Completeness} axiom of the IG \cite{sundararajan2017axiomatic}: $f(\boldsymbol{x}) - f(\hat{\boldsymbol{x}}) = \sum IG_i$ is a consequence of the Stokes' Theorem for the case of integral of 1-form:$\int_{\gamma} df = \int_{\partial \gamma} f = f(\boldsymbol{x}) - f(\hat{\boldsymbol{x}})
$, which guarantees path independence: the value of the integral is only dependent on the first and last points of the path, not the path itself.

\subsection{Saliency maps evaluations}
We evaluate Cross-domain IG using complementary evidence: (i) axiomatic guarantees, (ii) mechanistic analysis on a tractable model, (iii) qualitative case studies, and (iv) quantitative faithfulness benchmarks.

\section{Methods}
\label{sec:methods}

In this section, we define Cross-Domain IG (Section \ref{sec:cross_domain_ig_derivation}), and derive it based on the IG principles from Section~\ref{sec:preliminaries_integrated_gradients}. We then analyse it in the complex frequency domain using a simple yet representative convolutional network, highlighting its relation to the network's properties (Section \ref{sec:complex_ig_on_a_simple_model}). This analysis also provides theoretical grounding for the connection between frequency-domain IG and the Mechanistic Interpretation discussed in Section~\ref{sec:time_domain_explanation_limitations}. Finally, we detail the implementation of our method.

\subsection{Cross-domain IG derivation}
\label{sec:cross_domain_ig_derivation}
Let $f: \mathcal{D}_s \rightarrow \mathbb{R}$ be a deep neural network operating on a domain $\mathcal{D}_s \subseteq \mathbb{R}^n$. Also, denote $\boldsymbol{x}, \hat{\boldsymbol{x}} \in \mathcal{D}_s$ the input and baseline samples, respectively, as defined by the IG method. We introduce an invertible, differentiable transformation $T: \mathcal{D}_S \rightarrow \mathcal{D}_T$ and its inverse $T^{-1}$, which is also differentiable, with $\boldsymbol{z} = T(\boldsymbol{x})$, $\boldsymbol{x} = T^{-1}(\boldsymbol{z})$, and $\mathcal{D}_T \subseteq \mathbb{C}^m$. The cross-domain IG generates the saliency map for $f$, attributing the difference $f(\boldsymbol{x}) - f(\hat{\boldsymbol{x}})$ to the features $\boldsymbol{z}$, expressed in $\mathcal{D}_T$. To define Cross-domain Integrated Gradients, we consider the path integral of model gradients over the transformed feature space:
\begin{definition}[Cross-domain Integrated Gradients]
\label{def:cross_domain_ig}
Given a model $f: \mathcal{D}_s \rightarrow \mathbb{R}$, a transform $T: \mathcal{D}_S \rightarrow \mathcal{D}_T$ and its inverse $T^{-1}$, input and baseline samples $\boldsymbol{x}, \hat{\boldsymbol{x}} \in \mathcal{D}_s$ and $\boldsymbol{\gamma}(t)$ the line from $\boldsymbol{z} = T(\boldsymbol{x})$ to $\hat{\boldsymbol{z}} = T(\hat{\boldsymbol{x}})$ the Cross-Domain IG is defined as:
    \begin{equation}
        IG^{\mathcal{D}_T}_i(\boldsymbol{z}) = 2 \int_0^1 \Re{\frac{\partial (f \circ T^{-1})}{\partial{z_i}}\bigg|_{\boldsymbol{\gamma}(t)} \cdot (z_i - \hat{z}_i) }dt
    \end{equation}
\end{definition} 

Note that the original IG, eq. \ref{eq:integrated_gradients_original}, and $IG^{\mathcal{D}_T}$ explain the exact same functionality since $f(\boldsymbol{x})$ and $(f \circ T^{-1})(\boldsymbol{z})$ are equivalent. However, their output saliency maps are expressed in different domains. We now derive Definition \ref{def:cross_domain_ig} from the first principles of the original IG method, Section \ref{sec:preliminaries_integrated_gradients}. 

\textbf{Derivation sketch.} The original IG is only defined for real inputs. To enable complex-valued transformations, such as the Fourier transform, we extend IG for real-valued functions $g$ with complex inputs $\boldsymbol{z}$, referred to as \textit{Complex IG}. Our derivation builds on the two key points in Section \ref{sec:preliminaries_integrated_gradients}:
\begin{enumerate}
    \item \textbf{Line integral definition.} We begin by lifting $g:\mathbb{C}^n \rightarrow \mathbb{R}$ to a real function $u:\mathbb{R}^{2n} \rightarrow \mathbb{R}$ with $u([\boldsymbol{p}, \boldsymbol{q}]) = g(\boldsymbol{p} + j \boldsymbol{q})$, and apply the line-integral argument to $\int_\gamma du$. The end goal is to end up with a sum of integrals $\sum_i \int...dt$ similar to eq. \ref{eq:original_ig_derivation}. In the final step, each IG element is defined as the corresponding integral term of the final sum, $\int...dt$. 
    \item \textbf{Stokes' Theorem.} We define $u$ and derive complex IG to ensure path independence and satisfy the \textit{Completeness axiom}, which may fail for functions of several complex variables \cite{lebl2019tasty}. To this end, we first state and prove Lemma \ref{lem:cross_domain_ig_lemma} as an intermediate result. Using Lemma \ref{lem:cross_domain_ig_lemma}, we then derive Definition \ref{def:cross_domain_ig} using Wirtinger calculus. 
\end{enumerate}

\begin{lemma}
\label{lem:cross_domain_ig_lemma}
Let $g: \mathbb{C}^n \rightarrow \mathbb{R}$, $\boldsymbol{z} = \boldsymbol{p} + j \boldsymbol{q}$, with $\boldsymbol{p}, \boldsymbol{q} \in \mathbb{R}^n$, $\boldsymbol{\gamma}(t) = \hat{\boldsymbol{z}} + t(\boldsymbol{z} - \hat{\boldsymbol{z}}), t \in [0, 1]$ the line from the baseline point $\hat{\boldsymbol{z}}$ to the input point $\boldsymbol{z}$ and $\boldsymbol{n}(t) = \Re{\boldsymbol{\gamma}(t)}$ and $\boldsymbol{m}(t) = \Im{\boldsymbol{\gamma}(t)}$, $\boldsymbol{n}(t), \boldsymbol{m}(t) \in \mathbb{R}^n$. Then the IG of $g$ in $\boldsymbol{z}$ is given by: \begin{equation}
        IG^{\mathbb{C}^n}_i(\boldsymbol{z}) = \int_0^1 \left( \frac{\partial g}{\partial p_i} n_i^{\prime}(t) + \frac{\partial g}{\partial q_i} m_i^{\prime}(t)\right) dt
    \end{equation}
\end{lemma}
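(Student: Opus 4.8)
The plan is to reduce the complex-variable setting to the ordinary real-variable Integrated Gradients construction of Section~\ref{sec:preliminaries_integrated_gradients}, exploiting the fact that $g$ takes values in $\mathbb{R}$. First I would introduce the auxiliary function $u : \mathbb{R}^{2n} \rightarrow \mathbb{R}$ defined by $u(\boldsymbol{p}, \boldsymbol{q}) = g(\boldsymbol{p} + j\boldsymbol{q})$. Since $g$ is real-valued, no holomorphicity is available or required: $u$ is simply a differentiable function of the $2n$ real coordinates $(p_1,\dots,p_n,q_1,\dots,q_n)$, and $u(\boldsymbol{p},\boldsymbol{q}) = g(\boldsymbol{z})$ by construction. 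Under the canonical identification $\mathbb{C}^n \cong \mathbb{R}^{2n}$ sending $\boldsymbol{z}$ to $(\boldsymbol{p},\boldsymbol{q})$, the complex line $\boldsymbol{\gamma}(t) = \hat{\boldsymbol{z}} + t(\boldsymbol{z}-\hat{\boldsymbol{z}})$ corresponds to the real straight-line path $\tilde{\boldsymbol{\gamma}}(t) = (\boldsymbol{n}(t), \boldsymbol{m}(t))$, because taking real and imaginary parts is linear and therefore commutes with the affine interpolation.

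The second step is to apply the line-integral derivation of eq.~\ref{eq:original_ig_derivation} verbatim, now to $u$ along $\tilde{\boldsymbol{\gamma}}$. Computing the pullback $\tilde{\boldsymbol{\gamma}}^{*} du$ yields the one-form $\sum_{i} \bigl( \tfrac{\partial u}{\partial p_i}\, n_i^{\prime}(t) + \tfrac{\partial u}{\partial q_i}\, m_i^{\prime}(t)\bigr)\,dt$, so that
\begin{equation*}
\int_{\tilde{\gamma}} du = \sum_{i} \int_0^1 \left( \frac{\partial u}{\partial p_i}\, n_i^{\prime}(t) + \frac{\partial u}{\partial q_i}\, m_i^{\prime}(t)\right) dt .
\end{equation*}
Following the IG convention, I would then define the $i$-th component $IG^{\mathbb{C}^n}_i(\boldsymbol{z})$ as the $i$-th summand, grouping the two real coordinates $p_i, q_i$ that together constitute $z_i$ into a single attribution. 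Because $u$ and $g$ coincide, $\partial u/\partial p_i = \partial g/\partial p_i$ and $\partial u/\partial q_i = \partial g/\partial q_i$, which is exactly the claimed formula.

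The remaining task is to justify that this grouping is well posed, i.e. that the construction is path independent and complete, so that the per-coordinate terms may legitimately be attributed to $z_i$. The key observation is that $du$ is an exact $1$-form on $\mathbb{R}^{2n}$ irrespective of whether $g$ is holomorphic, so Stokes' theorem for a $1$-form applies directly, giving $\int_{\tilde{\gamma}} du = u(\text{endpoint}) - u(\text{startpoint}) = g(\boldsymbol{z}) - g(\hat{\boldsymbol{z}})$, hence completeness $\sum_i IG^{\mathbb{C}^n}_i(\boldsymbol{z}) = g(\boldsymbol{z}) - g(\hat{\boldsymbol{z}})$ together with path independence. I expect the main obstacle to be conceptual rather than computational: one must resist differentiating $g$ holomorphically, which would invoke Stokes' theorem for several complex variables where it can fail (as the derivation sketch notes), and instead work entirely over $\mathbb{R}^{2n}$. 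The only genuine bookkeeping is checking that taking real and imaginary parts commutes with the affine path, so that $n_i^{\prime}(t)$ and $m_i^{\prime}(t)$ are precisely the real and imaginary parts of $z_i - \hat{z}_i$, and that pairing the $2n$ real terms into $n$ groups is the correct grouping for a complex-valued attribution.
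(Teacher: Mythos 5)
Your proposal is correct and follows essentially the same route as the paper: define the real-valued surrogate $u(\boldsymbol{p},\boldsymbol{q}) = g(\boldsymbol{p}+j\boldsymbol{q})$ on $\mathbb{R}^{2n}$, identify the complex line $\boldsymbol{\gamma}$ with the real path $(\boldsymbol{n}(t),\boldsymbol{m}(t))$, pull back $du$ along it as in eq.~\ref{eq:original_ig_derivation}, and regroup the $2n$ real summands into $n$ per-coordinate attributions. Your added remarks on exactness of $du$ and Stokes' theorem for completeness match what the paper states separately in Section~\ref{sec:cross_domain_ig_derivation} rather than inside the lemma's proof, so nothing is missing or different in substance.
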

A detailed proof of Lemma \ref{lem:cross_domain_ig_lemma} can be found in Appendix \ref{sec:proof_of_lemma_41}. From Lemma \ref{lem:cross_domain_ig_lemma}, and considering $g(\boldsymbol{z})= f\left(T^{-1}(\boldsymbol{z})\right)$ and the complex differential form \cite{range1998holomorphic} $dg = \partial g + \overline{\partial}g$ we can write the complex integrated gradient definition as:
\begin{equation}
\label{eq:integrated_gradients_complex_domain_dif_complx}
    IG^{\mathbb{C}^n}_i = 2 \int_0^1 \Re{\frac{\partial g}{\partial{z_i}}\gamma_i^{\prime}(t)}dt
\end{equation}
The complete derivation can be found in Appendix \ref{sec:derivation_of_ig_definition}. Notice that Cross-domain IG maintains the \textit{Completeness} property since $\int_{\gamma} du = u(\boldsymbol{a}(1)) - u(\boldsymbol{a}(0)) = g(\boldsymbol{z}) - g(\hat{\boldsymbol{z}}) = f(\boldsymbol{x}) - f(\hat{\boldsymbol{x}})$, where $u: \mathbb{R}^{2n} \rightarrow\mathbb{R}\ s.t.\ g(\boldsymbol{p} + j\boldsymbol{q}) = u([\boldsymbol{p}, \boldsymbol{q}])$ and $\boldsymbol{a} = [\boldsymbol{n}, \boldsymbol{m}]$.

\begin{remark}
    Although definition \ref{def:cross_domain_ig} defines a linear path of integration, in our derivation, Eq. \ref{eq:integrated_gradients_complex_domain_dif_complx}, the path of integration is a general curve $\boldsymbol{\gamma}(t)$. This enables incorporating into cross-domain IG alternative integration paths/methods to reduce sensitivity to noise \cite{yang2023idgi, kapishnikov2021guided}.  
\end{remark}

\textbf{Cross-Domain IG for real-valued inputs.} If $g$ processes real-valued inputs, then eq. \ref{eq:integrated_gradients_complex_domain_dif_complx} is equivalent to eq. \ref{eq:integrated_gradients_original}: since $g(\boldsymbol{z}) = g(\boldsymbol{p} + j0)$, $\partial g / \partial q = 0$, $\partial g / \partial z = (1/2) \partial g / \partial p$. Thus, if $\mathcal{D}_T \subseteq \mathbb{R}^n$ the cross-domain IG can equivalently be expressed as $IG^{\mathcal{D}_T}_i(\boldsymbol{z}) = (z_i  - \hat{z}_i)\int \frac{\partial (f \circ T^{-1})}{\partial z_i}dt$. 

\begin{remark}
    For $T$ chosen as a Haar-DWT view transform, this recovers the view-space IG used in MIX \cite{tran2025mix}
\end{remark}

\begin{remark}
    In IG \cite{sundararajan2017axiomatic}, the baseline $\hat{\boldsymbol{x}}$ is defined as the point without information about the original model inference. The authors argued that most deep networks admit such a neutral input. For cross-domain IG, if $\hat{\boldsymbol{x}}$ exists, and $T$ is invertible, then $\hat{\boldsymbol{z}}$ is trivially defined. Crucially, cross-domain IG enables baselines that were not easily defined, e.g., filtering specific components from $\boldsymbol{x}$ to form $\hat{\boldsymbol{z}}$.
\end{remark}

\subsection{Complex IG on a simple model}
\label{sec:complex_ig_on_a_simple_model}
\cite{adebayo2018sanity} analytically studies a minimal single-layer convolutional network, demonstrating that IG can collapse into an \textit{edge detector}, producing misleading saliency maps. Although this exposes a failure mode of the IG in the input domain, we show that Complex-IG faithfully reflects the inner mechanisms of a simple convolutional network in the frequency domain. In direct parallel, we derive a closed-form link between the complex IG saliency map of a CNN and the frequency response of its filters. Building on the example in Section~\ref{sec:time_domain_explanation_limitations}, we work on a simple CNN and prove that Complex-IG highlights each filter's gain at its corresponding input frequency.

Let $f$ be a convolutional neural network composed of a single convolutional layer (1 channel) followed by a ReLU operation and Global Average Pooling: $f(\boldsymbol{x}) = AvgPool\left(ReLU(\boldsymbol{w} * \boldsymbol{x})\right)$. We begin with the case in which $f$ processes windows sampled from single-component sinusoidal signals $x(t) = a_j \cdot cos(2 \pi \xi_j t + \phi),\ a_j > 0$. Then, the output $f(\boldsymbol{x})$ is \cite{kechris2024dc}:
$f(\boldsymbol{x}) = \frac{a_j b_j}{\pi}$, with $b_i$ the amplification of the filter $\boldsymbol{w}$ at frequency $\xi_i Hz$: $b_i = \|\sum_n w_n e^{-2\pi \xi_in}\|$. We employ the Complex IG method on $f$ with baseline input $\hat{\boldsymbol{x}} = \boldsymbol{0},\ f(\boldsymbol{0}) = 0$. This yields $IG^{\mathbb{C}^n}_i = 0,\ \forall i \neq j$ and $\sum_i IG^{\mathbb{C}^n}_i = f(\boldsymbol{x}) - f(\hat{\boldsymbol{x}})$.Thus, 
\begin{equation}
\label{eq:frequency_ig_relation_to_freq_response}
    IG^{\mathbb{C}^n}_j = f(\boldsymbol{x}) = \frac{a_j b_j}{\pi}
\end{equation} This links $IG^{\mathbb{C}^n}_j$ to the output frequency content $a_j b_j$ and, by extension, to the convolutional filter's frequency response. An example for the model of Section~\ref{sec:time_domain_explanation_limitations} is presented in Figure \ref{fig:freq_response_ig} (Appendix \ref{sec:relationship_between_frequency_domain_IG_and_frequency_response}).

\subsection{Implementation} 
Autograd (pytorch / tensorflow) allows for automatic differentiation with complex variables using Wirtinger calculus \cite{kreutz2009complex}. Thus, the complex IG can be directly approximated by autograd, using Definition \ref{def:cross_domain_ig} or Lemma \ref{lem:cross_domain_ig_lemma}, with the detail that Autograd (in both libraries) calculates the conjugate of the complex partial derivative. For the integral calculation, we use a summation approximation similar to \cite{sundararajan2017axiomatic}. The algorithms for estimating cross-domain IG using Lemma \ref{lem:cross_domain_ig_lemma} or Definition \ref{def:cross_domain_ig} are presented in Algorithms \ref{alg:ig_complex} and  \ref{alg:ig_complex_dif_complex} in the appendix.
\begin{remark}
    The numerical approximation of the integral in Definition \ref{def:cross_domain_ig} requires multiple differentiations, which increases computational overhead. Although the original IG also suffers from similar overhead, our method requires an additional step due to the inverse transform step (see line 9 in Algorithm \ref{alg:ig_complex_dif_complex} in the Appendix).
\end{remark}

\section{Experiments}
\label{sec:experiments}

\subsection{Qualitative evaluation}
\label{sec:applications}
We deploy cross-domain IG in a range of time-series applications and models spanning the three main time-series task types: regression (section \ref{sec:heart_rate_extraction_from_physiological_signals}), classification (section \ref{sec:electroencephalography_based_epileptic_seizure_detection}), and forecasting (Section \ref{sec:foundation_model_time_series_forecasting}). In all cases, the models operate on time-domain inputs. For each application, we (i) characterise the signal from a signal-processing perspective, (ii) state interpretability task: \textit{what do we want to learn about our model's behavior through a saliency map?}, (iii) select an explanation domain using domain knowledge, and (v) summarise actionable insights from the resulting attributions. Time-Domain IG attributions and additional qualitative examples are provides in Appendix \ref{sec:example_time_domain_attributions} and \ref{sec:additional_examples}.

\subsubsection{Heart rate extraction from physiological signals}
\label{sec:heart_rate_extraction_from_physiological_signals}
We use the KID-PPG \cite{kechris2024kid}, a deep convolutional model with attention, to extract heart rate (HR) from photoplethysmography (PPG) signals collected from a wrist-worn wearable device. We use signals from the PPGDalia dataset \cite{reiss2019deep}. For a time window small enough for the HR frequency, $\xi_{hr}$, to be considered constant, a clean PPG signal can be modeled as \cite{kechris2024kid}:$ x(t) = a_1 cos(2 \pi \cdot \xi_{hr} \cdot t + \phi) + a_2 cos(2\pi \cdot (2\xi_{hr})\cdot t + \phi)$, with $a_1 > a_2$. However, external signals are also usually present in PPG recordings \cite{reiss2019deep, kechris2024kid}. These \textit{interferences} are not created by the heart and are preventing the model from making accurate HR inferences.

\begin{remark}
    KID-PPG processes PPG signals that contain both heart-related components and external interference. A trustworthy model should \textit{base} the inferred heart rate on heart-related signals only, filtering out all other sources of noise. 
\end{remark}

\textbf{Interpretability task.} Given a PPG sample and KID-PPG's HR inference, determine whether the model is focusing on heart-related information or external interference.

\textbf{Problem-specific transformation.} Since our understanding of this application is mostly frequency-based, we have selected the frequency domain, using the Fourier transform, as the explanation target domain. Hence, the frequency-domain IG highlights individual frequencies as being important to the final model inference. This allows us to investigate whether the HR inference is produced by components related to the heart or by external interference. 

An illustration of two PPG inputs and the corresponding frequency-domain IGs is presented in Figure \ref{fig:experiments_ppg}. The frequency IG identifies samples in which the model infers heart rate from external interference, thus limiting the reliability of the model's output.

\begin{remark}
    Frequency-domain IG highlights whether KID-PPG inference is \textbf{trustworthy} (based on heart oscillations) or \textbf{spurious} (based on motion-induced artifacts).
\end{remark}

\begin{figure}[h]
    \centering
    \includegraphics[scale=0.8]{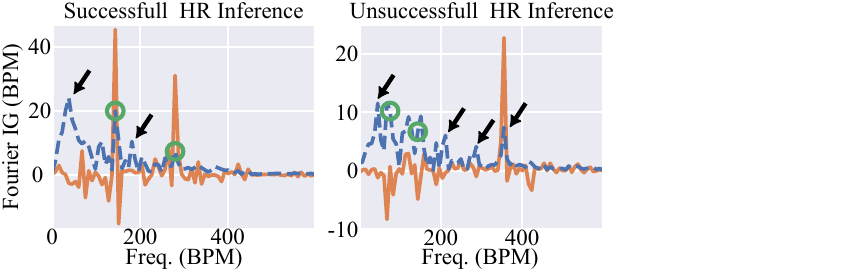}
    \caption{\textbf{Frequency-domain IG on heart rate inference model.} The \textbf{\textcolor{sns_blue}{PPG}} signal includes components from the \textbf{\textcolor{sns_green}{heart rate}} and other components attributed to external interference (denoted with arrows), e.g. motion. \textbf{Left:} Sample with a small inference error 0.93 beats-per-minute (BPM). The \textbf{\textcolor{sns_orange}{IG}} highlights the two heart components located at $hr$ and $2 \cdot hr$ (second harmonic), with more weight given to the actual heart rate frequency. \textbf{Right:} \textbf{\textcolor{sns_blue}{PPG}} sample with high inference error (26.78 BPM). \textbf{\textcolor{sns_orange}{IG}} coefficients highlight frequency components which are not related to the heart.}
    \label{fig:experiments_ppg}
\end{figure}

\subsubsection{Electroencephalography-based epileptic seizure detection}
\label{sec:electroencephalography_based_epileptic_seizure_detection}
We use the \texttt{zhu-transformer} \cite{zhu2023automated}, which performs seizure detection on scalp-electroencephalography (EEG). We analyze a recording from the Physionet Siena Scalp EEG Database v1.0.0~\cite{detti_siena_2020, detti_eeg_2020, goldberger_physiobank_2000}. In EEG a single channel captures the electrical activity of multiple \textit{sources}: e.g., epileptic activity, muscle interference, or electrical noise. 

\begin{remark}
    A seizure classification model processes the aggregated activity of all sources in the EEG. The model should isolate only the epileptic activity, filtering out all others, to reach a trustworthy inference.
\end{remark}

\textbf{Interpretability task.} Given an EEG recording and the corresponding \texttt{zhu-transformer} seizure classification, we want to identify the \textit{sources} on which the model based its inference.

\textbf{Problem-specific transformation.} We chose Independent Component Analysis~\cite{lee1998independent} (ICA) as our transform of choice. ICA isolates the activity of each individual source to a source-specific channel (Independent Component), assuming statistical independence between the sources. This allows the ICA-domain IG to produce attributions for each individual isolated source, thereby providing insights into our interpretability task (Figure \ref{fig:eeg_ica_channel_importance}). 

\begin{remark}
    ICA-IG highlights whether \texttt{zhu-transformer} inference is based on known components of epileptic seizure activity or other components irrelevant to the seizure, thus further reinforcing \textbf{trust} in the model decision.
\end{remark}

\begin{figure*}[h]
    \centering
    \includegraphics[scale=0.8]{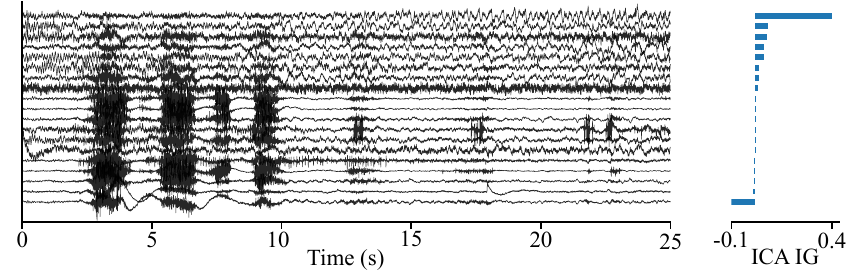}
    \caption{\textbf{ICA-domain IG on seizure detection model.} The ICA components are sorted from the component with the highest IG significance (top) to the lowest (bottom). \textbf{Left:} 19 output channels calculated from ICA on the original EEG channels. The first channel contains the majority of the epileptic activity, which is visible as an evolving pattern of spike-and-wave discharges at $\sim 4.5$ Hz. Some epileptic activity can also be found in the second channel. Significant muscle artifacts are isolated in the 9th-19th channels between 4 and 10 seconds. \textbf{Right:} IG saliency map calculated on the channel components. The map identifies the first channel as the most significant channel in detecting this sample as epileptic. Some significance, although much less, is also given to the next four channels. The channels corresponding to interference components do not get any significance in the output of the classifier. The last channel \textit{tends to tilt} the classifier towards a non-epileptic output.}
    \label{fig:eeg_ica_channel_importance}
\end{figure*}

\subsubsection{Foundation model time series forecasting}
\label{sec:foundation_model_time_series_forecasting}
We use TimesFM \cite{das2024decoder} time-series foundation model to explain forecasting outputs. We perform zero-shot forecasting, without any fine-tuning, on a time series with exponential trend and seasonal components (Figure \ref{fig:timesfm_stl}). 

\begin{remark}
    A time-series forecasting model should be equally successful in modeling both the trend and the season to achieve a low-error, long-horizon forecast. 
\end{remark}

\textbf{Interpretability task.} Given a time-series input and the TimesFM forecast, determine whether the trend or the season is more difficult to model in the long-horizon forecast setting.

\textbf{Task-specific transform.} To isolate the relevant \textit{concepts} , we chose Seasonal-Trend decomposition using LOESS (STL) \cite{cleveland1990stl} to decompose the input time series into trend and seasonal components.

This attribution domain allows us to study the model's behavior for long-term forecasting horizons where the forecast error increases: the model underestimates the overall trend, while the estimation of the seasonal component presents a smaller error.

\begin{remark}
    Seasonal-Trend IG reveals that TimesFM underweights the trend, degrading long-horizon forecasts. This offers concrete insights to improve model behaviour.
\end{remark}

\begin{figure}[h]
    \centering
    \includegraphics[scale=0.7]{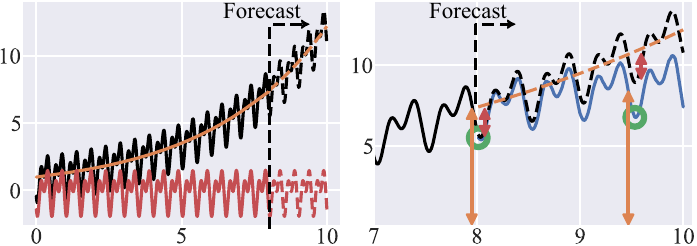}
    \caption{\textbf{Seasonal-Trend IG on time series foundation model.} \textbf{Left:} \textbf{Input} time series decomposed via STL into \textbf{\textcolor{sns_orange}{trend}} and \textbf{\textcolor{sns_red}{seasonality}}. \textbf{Right:} Zero-shot \textbf{\textcolor{sns_blue}{forecasting}} using TimesFM with \textbf{\textcolor{sns_red}{Seasonal}}-\textbf{\textcolor{sns_orange}{Trend}} IG. For a small horizon, one step ahead prediction (\textbf{\textcolor{sns_green}{first circle}}), TimesFM forecasts accurately. Of output, $7.5$ units are attributed to trend (\textcolor{sns_orange}{$\boldsymbol{\Updownarrow}$}), aligning with ground truth (dashed orange) and similarly $-1.96$ units to seasonality (\textcolor{sns_red}{$\boldsymbol{\Updownarrow}$}). For a longer horizon (\textbf{\textcolor{sns_green}{second circle}}) the forecast absolute error rises from 0.2 to 2.14. Most of it stems from the model's underestimation of the \textbf{\textcolor{sns_orange}{trend}} ($21\%$ relative error), while the \textbf{\textcolor{sns_red}{seasonal}} effect is correctly captured by the model ($5.1\%$ relative error).}
    \label{fig:timesfm_stl}
\end{figure}

\subsection{Quantitative evaluation}
\label{sec:quantitative_evaluation}
\subsubsection{Faithfulness on the real-world use cases}
We evaluate faithfulness via feature-level insertion/deletion on the two real-world models from Sections \ref{sec:heart_rate_extraction_from_physiological_signals}, \ref{sec:electroencephalography_based_epileptic_seizure_detection} (protocol and results in Appendix \ref{sec:feature_level_insertion_deletion}). For PPG, deleting the top 3\% Fourier features ranked by Cross-domain IG changes the predicted heart-rate output by 66.39 BPM on average, compared to 10.13 BPM when deleting the top 3\% time-domain features. Conversely, inserting the top 3\% of frequency features yields a smaller deviation from the original HR inference (37.98 BPM) than inserting the top 3\% time-domain features (94.58 BPM). 

For EEG, we delete/retain a single IC component. Retaining the most important IC preserves the model output substantially better than a random IC: the average change in seizure probability is 0.0696 vs 0.4396 for randomly retained IC (smaller is better). In the deletion test, deleting the most important IC causes a much larger change (0.177) than deleting a random IC (0.0083), as expected (larger is better).

\subsubsection{Comparisons across methods and domains}
We quantify explanation faithfulness using Cumulative Prediction Difference (CPD) from TIMING \cite{jang2025timing} and compare against TIMEX++ \cite{liu2024timex++} and TIMING \cite{jang2025timing} in Table \ref{tab:comparison_with_timex}. Following \cite{jang2025timing}, we report CPD under two substitution average and zero as defined in \cite{jang2025timing}.

\begin{table*}[!htbp]
    \centering
    \scriptsize
    \begin{tabular}{l|c@{\hspace{2pt}}c@{\hspace{2pt}}|c@{\hspace{2pt}} c@{\hspace{2pt}}|c@{\hspace{2pt}} c@{\hspace{2pt}}|c@{\hspace{2pt}} c@{\hspace{2pt}}|c@{\hspace{2pt}} c@{\hspace{2pt}}}
        \toprule
        & \multicolumn{2}{c|}{\textbf{PAM}} & \multicolumn{2}{c|}{\textbf{Boiler}} & \multicolumn{2}{c|}{\textbf{Epilepsy}} & \multicolumn{2}{c|}{\textbf{Wafer}} & \multicolumn{2}{c}{\textbf{Freezer}}\\
        & \small{Avg.} & \small{Zero} & \small{Avg.} & \small{Zero} & \small{Avg.} & \small{Zero} & \small{Avg.} & \small{Zero} & \small{Avg.} & \small{Zero} \\
        \midrule
        TIMEX++ & 0.057 \tiny{$\pm$ 0.004} & 0.070 \tiny{$\pm$ 0.004} & 0.124 \tiny{$\pm$ 0.028} & 0.208 \tiny{$\pm$ 0.043} & 0.030 \tiny{$\pm$ 0.004} & 0.032 \tiny{$\pm$ 0.004} & 0.000 \tiny{$\pm$ 0.000} & 0.000 \tiny{$\pm$ 0.000} & 0.216 \tiny{$\pm$ 0.056} & 0.216 \tiny{$\pm$ 0.056}\\
        TIMING & 0.463 \tiny{$\pm$ 0.007} & 0.602 \tiny{$\pm$ 0.033} & \textbf{1.259 \tiny{$\pm$ 0.065}} & 1.578 \tiny{$\pm$ 0.085} & 0.057 \tiny{$\pm$ 0.005} & 0.060 \tiny{$\pm$ 0.005} & 0.674 \tiny{$\pm$ 0.014} & 0.674 \tiny{$\pm$ 0.014} & 0.409 \tiny{$\pm$ 0.109} & 0.409 \tiny{$\pm$ 0.109} \\ 
        \midrule
        Fourier (Ours) & 0.207 \tiny{$\pm$ 0.008} & 0.624 \tiny{$\pm$ 0.035} & 1.233 \tiny{$\pm$ 0.038} & 1.335 \tiny{$\pm$ 0.047} & 0.141 \tiny{$\pm$ 0.03} & 0.144 \tiny{$\pm$ 0.004} & \textbf{0.772 \tiny{$\pm$ 0.036}} & \textbf{0.772 \tiny{$\pm$ 0.036}} & 0.252 \tiny{$\pm$ 0.062} & 0.252 \tiny{$\pm$ 0.062}\\ 
        Cepstrum (Ours) & \textbf{1.183 \tiny{$\pm$ 0.040}} & \textbf{1.230 \tiny{$\pm$ 0.044}} & 1.246 \tiny{$\pm$ 0.057} & \textbf{1.665 \tiny{$\pm$ 0.080}} & \textbf{0.795 \tiny{$\pm$ 0.041}} & \textbf{0.712 \tiny{$\pm$ 0.028}} & 0.244 \tiny{$\pm$ 0.021} & 0.647 \tiny{$\pm$ 0.023} & \textbf{0.646 \tiny{$\pm$ 0.192}} & \textbf{0.726 \tiny{$\pm$ 0.165}}\\
    \end{tabular}
    \caption{Performance comparison of Cross-domain IG with TIMEX++ \cite{liu2024timex++} and TIMING \cite{jang2025timing}. We evaluate Cumulative Prediction Difference (CPD) aggregated over 5 random repetitions (average $\pm$ standard error). Higher is better.}
    \label{tab:comparison_with_timex}
\end{table*}

Overall, Cross-domain IG is consistently stronger than TIMEX++ and competitive with TIMING. Importantly, different domain instantiations of our framework excel on different datasets (e.g., Cepstrum on PAM/Epilepsy/Freezer, Fourier on Wafer), quantitatively supporting that the explanation domain is task-dependent.

Where direct frequency-domain baselines are available (AudioMNIST \cite{becker2024audiomnist}), Cross-domain IG achieves comparable faithfulness and complexity to FreqRISE \cite{brusch2024freqrise} and VIL \cite{vielhaben2024explainable} (Appendix \ref{sec:relation_to_freqrise}). Cross-domain IG directly supports expanding to new transforms for which we provide additional evaluations for Discrete Wavelet Transform (DWT) and Complex Cepstrum. In the same benchmark, comparing Cross-domain IG across domains shows that Cepstrum yields the best faithfulness for digit classification, while Time-Frequency yields the best faithfulness for gender classification, further reinforcing the task-dependence of the explanation domain. DWT presented the best complexity for both tasks.

We note that these benchmarks measure faithfulness under standardized protocols. They do not capture the semantic utility of a domain choice, which we illustrate in the case studies (Section \ref{sec:applications}).

\section{Discussion}

Across the qualitative case studies (Section \ref{sec:applications}), Cross-Domain IG produces attributions in domains that align with practitioner reasoning (frequency for PPG, ICA for EEG, trend/seasonality for forecasting). Quantitatively, insertion /deletion benchmarks (Section \ref{sec:quantitative_evaluation}) indicate that cross-domain attributions can be more faithful than time-domain attributions and remain competitive with strong time-series saliency methods. We demonstrate  comparable faithfulness/complexity metrics to existing frequency-domain saliency methods  (Appendix \ref{sec:relation_to_freqrise}), while also demonstrating that our method applies to a broader class of transforms, including complex-valued ones.

\textbf{Limitations.} While Cross-Domain IG addresses the misalignment between time-domain saliency maps and latent structure, it inherits several generic limitations of IG. We use a straight-line integration path and a single zero baseline; alternative paths or baselines can change the quantitative attributions, and existing variants of IG that stabilize these choices are directly applicable but not explored here. Our framework assumes an invertible, differentiable transform and is instantiated only with transforms for which this assumption is reasonable, e.g., Fourier, ICA, and seasonal-trend decomposition. Non-invertible or approximately invertible representations are outside our guarantees. Finally, the method presupposes that the practitioner can choose a meaningful explanation domain. If this choice is poor or based on incorrect prior knowledge, Cross Domain IG will produce clean-looking saliency maps that may be semantically misleading (see Appendix \ref{sec:limitations} for a detailed discussion). A further limitation is that our qualitative and quantitative evaluations serve different purposes. In the qualitative case studies, we intentionally leverage domain knowledge to choose an explanation space that matches the interpretability question. In contrast, benchmark comparisons necessarily adopt fixed protocols and standardized transform choices to enable reproducible scoring across methods and datasets. This gap reflects an open challenge in time-series explainability: how to evaluate \emph{domain appropriateness} and practitioner utility, not only faithfulness under a single masking protocol. Developing benchmark suites and metrics that account for task-dependent domain selection is an important direction for future work.

\section{Conclusions}

We introduce a novel generalization of the Integrated Gradients method, which enables saliency map generation in any invertible, differentiable transform domain, including complex spaces. As transforms capture high-level interactions between input points, our methods enhance model explainability, especially in time-series data where individual time-point features are often uninformative. We demonstrated the versatility of Cross-Domain Integrated Gradients, applying it to a diverse set of time-series tasks, model architectures, and explanation target domains. Fields where time signals are extensively used, such as healthcare, finance, and environmental monitoring, could benefit from domain-specific saliency maps. In particular, with the recent rise of time-series foundation models, our method provides a powerful investigative tool for examining model behavior. We release an open-source library to enable broader adoption of cross-domain time-series explainability. 

\section*{Impact statement} 
This work enables time-series model interpretability by generating saliency maps in meaningful domains, such as frequency or independent component bases. Fields where time signals are extensively used, such as healthcare, finance and environmental monitoring, could benefit from domain-specific saliency maps. In particular, with the recent rise of time-series foundation models, our method provides a strong investigation tool for inspecting model behavior. 

Risks may arise if the selected explanation target domain is not appropriate or if saliency maps are over-interpreted. It is important to note that the saliency map provides only feature significance scores. Interpreting these scores requires domain expertise. We encourage a holistic interpretative approach to integrating domain knowledge with cross-domain saliency maps. We also caution that this method alone cannot function as definitive proof of the behavior of the model. Responsible usage of the method should take into consideration model, data, and transformation limitations, especially in high-stakes settings such as in healthcare. We elaborate on the limitations of our method in Appendix \ref{sec:limitations}

\section*{Acknowledgements}
We thank Nikolaos Tsakanikas for insightful feedback on the methodological formulation and derivation. This research was partially supported by IMEC through a joint PhD grant for ESL-EPFL. Also, this work was supported in part by the Swiss NSF, grant no. 10.002.812, titled “Edge-Companions: Hardware/Software Co-Optimization Toward Energy-Minimal Health Monitoring at the Edge”. 

\bibliographystyle{plainnat}
\bibliography{refs}

\clearpage
\appendix

\section{Cross-domain IG Algorithms}
\begin{algorithm}[h]
    \caption{Complex Target Domain IG}
    \label{alg:ig_complex}
    \renewcommand{\algorithmicrequire}{\textbf{Input:}}
	\renewcommand{\algorithmicensure}{\textbf{Output:}}
    \begin{algorithmic}[1]
        \REQUIRE $f(\cdot)$, $x$, $\hat{x}$, $n_{iter}$
		\ENSURE $IG$
        \STATE $i \gets 1$
        \STATE $sum\_real \gets 0$
        \STATE $sum\_imag \gets 0$
        \STATE $tape\_real \gets tensorflow.GradientTape()$
        \STATE $tape\_imag \gets tensorflow.GradientTape()$
        \STATE $\hat{z} \gets T(\hat{x})$
        \FOR{$i \leq n_{iter}$}
            \STATE $X \gets T(x)$
            \STATE $z \gets \hat{z} + (z - \hat{z}) \cdot i / n_{iter}$

            \STATE $re\_z \gets \Re{z}$
            \STATE $im\_z \gets \Im{z}$
            
            \STATE $tape\_real.watch($re\_z$)$
            \STATE $tape\_imag.watch($im\_z$)$ 

            \STATE $\hat{z} \gets re\_z + j \cdot im\_z$
            
            \STATE $x_{rec} \gets T^{-1}(\hat{z})$
            \STATE $y \gets f(x_{rec})$

            \STATE $re\_dy \gets tape\_real.gradient(y, re\_z)$ \COMMENT{Calculate $\frac{\partial g}{\partial p_i}$}
            \STATE $im\_dy \gets tape\_imag.gradient(y, im\_z)$ \COMMENT{Calculate $\frac{\partial g}{\partial q_i}$}
            \STATE $sum\_real \gets sum\_real + re\_dy$
            \STATE $sum\_imag \gets sum\_imag + im\_dy$
            \STATE $i \gets i + 1$
        \ENDFOR
        \STATE $sum\_real \gets sum\_real / n_{iter}$
        \STATE $sum\_imag \gets sum\_imag / n_{iter}$
        \STATE $IG = \Re{z - \hat{z}} \cdot sum\_real + \Im{z - \hat{z}} \cdot sum\_imag$
    \end{algorithmic}
\end{algorithm}

\begin{algorithm}[h]
    \caption{Complex Target Domain IG with complex differential}
    \label{alg:ig_complex_dif_complex}
    
    \renewcommand{\algorithmicrequire}{\textbf{Input:}}
	\renewcommand{\algorithmicensure}{\textbf{Output:}}
    \begin{algorithmic}[1]
        \REQUIRE $f(\cdot)$, $x$, $\hat{x}$, $n_{iter}$
		\ENSURE $IG$
        \STATE $i \gets 1$
        \STATE $sum \gets 0$
        \STATE $tape \gets tensorflow.GradientTape()$
        \STATE $\hat{z} \gets T(\hat{x})$
        \FOR{$i \leq n_{iter}$}
            \STATE $z \gets T(z)$
            \STATE $z \gets \hat{z} + (z - \hat{z}) \cdot i / n_{iter}$
            \STATE tape.watch($z$)
            \STATE $x_{rec} \gets T^{-1}(z)$
            \STATE $y \gets f(x_{rec})$
            \STATE $dy \gets tape.gradient(y, X)$
            \STATE $sum \gets sum +  \overline{dy}$
            \STATE $i \gets i + 1$
        \ENDFOR
        \STATE $sum \gets sum / n_{iter}$
        \STATE $IG = 2 \Re{(z - \hat{z}) \cdot sum}$
    \end{algorithmic}
\end{algorithm}

\section{Proof of Lemma 4.1}
\label{sec:proof_of_lemma_41}
\begin{lemma*}
Let $g: \mathbb{C}^n \rightarrow \mathbb{R}$, $\boldsymbol{z} = \boldsymbol{p} + j \boldsymbol{q}$, with $\boldsymbol{p}, \boldsymbol{q} \in \mathbb{R}^N$, $\boldsymbol{\gamma}(t) = \hat{\boldsymbol{z}} + t(\boldsymbol{z} - \hat{\boldsymbol{z}}), t \in [0, 1]$ the line from the baseline point $\hat{\boldsymbol{z}}$ to the input point $\boldsymbol{z}$ and $\boldsymbol{n}(t) = \Re{\boldsymbol{\gamma}(t)}$ and $\boldsymbol{m}(t) = \Im{\boldsymbol{\gamma}(t)}$, $\boldsymbol{n}(t), \boldsymbol{m}(t) \in \mathbb{R}^n$. Then the IG of $g$ in $\boldsymbol{z}$ is given by: \begin{equation}
        IG^{\mathbb{C}^n}_i(\boldsymbol{z}) = \int_0^1 \left( \frac{\partial g}{\partial p_i} n_i^{\prime}(t) + \frac{\partial g}{\partial q_i} m_i^{\prime}(t)\right) dt
    \end{equation}
\end{lemma*}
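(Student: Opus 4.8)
The plan is to reduce this complex-variable statement to the ordinary, real-variable Integrated Gradients of Section~\ref{sec:preliminaries_integrated_gradients} by identifying $\mathbb{C}^n$ with $\mathbb{R}^{2n}$. Concretely, I would introduce the real-valued function $u: \mathbb{R}^{2n} \rightarrow \mathbb{R}$ defined by $u([\boldsymbol{p}, \boldsymbol{q}]) = g(\boldsymbol{p} + j\boldsymbol{q})$, so that $g$ and $u$ encode the same map once each complex coordinate $z_i$ is split into the real pair $(p_i, q_i)$. Under this identification the complex line $\boldsymbol{\gamma}(t) = \hat{\boldsymbol{z}} + t(\boldsymbol{z} - \hat{\boldsymbol{z}})$ becomes the straight segment $\boldsymbol{a}(t) = [\boldsymbol{n}(t), \boldsymbol{m}(t)]$ in $\mathbb{R}^{2n}$, running from $\boldsymbol{a}(0) = [\Re{\hat{\boldsymbol{z}}}, \Im{\hat{\boldsymbol{z}}}]$ to $\boldsymbol{a}(1) = [\boldsymbol{p}, \boldsymbol{q}]$. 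Because $g$ is real-valued, $u$ is a genuine real function of $2n$ real variables, so the real differential-form machinery applies verbatim and none of the pathologies flagged in the sketch for holomorphic functions of several complex variables intervene at this stage.

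The core computation then mirrors eq.~\ref{eq:original_ig_derivation}. I would write the exact $1$-form $du = \sum_{i=1}^n \frac{\partial u}{\partial p_i}\,dp_i + \sum_{i=1}^n \frac{\partial u}{\partial q_i}\,dq_i$ and pull it back along $\boldsymbol{a}$, using $dp_i = n_i^{\prime}(t)\,dt$ and $dq_i = m_i^{\prime}(t)\,dt$ to obtain
\begin{equation*}
\boldsymbol{a}^* du = \left( \sum_{i=1}^n \frac{\partial u}{\partial p_i} n_i^{\prime}(t) + \sum_{i=1}^n \frac{\partial u}{\partial q_i} m_i^{\prime}(t) \right) dt.
\end{equation*}
Integrating over $t \in [0,1]$ and exchanging the finite sum with the integral gives $\int_{\boldsymbol{a}} du = \sum_{i=1}^n \int_0^1 \bigl( \frac{\partial u}{\partial p_i} n_i^{\prime}(t) + \frac{\partial u}{\partial q_i} m_i^{\prime}(t) \bigr)\,dt$. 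Identifying the $i$-th summand as the attribution assigned to the complex coordinate $z_i$, and using $\partial u/\partial p_i = \partial g/\partial p_i$ and $\partial u/\partial q_i = \partial g/\partial q_i$, yields exactly the claimed formula for $IG^{\mathbb{C}^n}_i(\boldsymbol{z})$.

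The genuinely load-bearing step is not the calculation but the \emph{grouping} convention, which is where I expect the only real subtlety to lie. Viewed as a function on $\mathbb{R}^{2n}$, ordinary IG would produce $2n$ separate scores, one per real coordinate; the lemma instead bundles the $p_i$- and $q_i$-contributions into a single score per complex feature $z_i$. I would justify this by noting that the decomposition $\int_{\boldsymbol{a}} du = \sum_i (\mathrm{term}_i)$ is valid for \emph{any} partition of the $2n$ real coordinates into groups, and that pairing $(p_i, q_i)$ is precisely the partition making each score an attribution to one component $z_i$ of the explanation domain — the same pairing that Wirtinger calculus later repackages into $\partial g/\partial z_i$ in eq.~\ref{eq:integrated_gradients_complex_domain_dif_complx}. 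Finally, I would record that because $du$ is an exact real $1$-form, the fundamental theorem for line integrals (Stokes for a $1$-form) gives $\int_{\boldsymbol{a}} du = u(\boldsymbol{a}(1)) - u(\boldsymbol{a}(0)) = g(\boldsymbol{z}) - g(\hat{\boldsymbol{z}})$, which is path-independent and furnishes Completeness; this is the one place where the real-valuedness of $g$ is essential, since it guarantees exactness of $du$ rather than the obstructions that arise for the complex form of a non-holomorphic map.
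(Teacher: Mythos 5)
Your proposal is correct and follows essentially the same route as the paper's proof: both define $u([\boldsymbol{p},\boldsymbol{q}]) = g(\boldsymbol{p}+j\boldsymbol{q})$, identify the complex segment $\boldsymbol{\gamma}$ with the real path $\boldsymbol{a}(t)=[\boldsymbol{n}(t),\boldsymbol{m}(t)]$ in $\mathbb{R}^{2n}$, pull back the exact $1$-form $du$, integrate, and regroup the $(p_i,q_i)$ contributions into one attribution per complex coordinate $z_i$. Your added remarks on the grouping convention and on Completeness via path independence match the paper's surrounding discussion rather than adding a genuinely different argument.
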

\begin{proof}
Let $u: \mathbb{R}^{2n} \rightarrow \mathbb{R}$ such that $g(\boldsymbol{z}) = u(\boldsymbol{w}), \forall \boldsymbol{z} = \boldsymbol{p}+j\boldsymbol{q}, \boldsymbol{w} = [\boldsymbol{p}, \boldsymbol{q}]$. For the differential form of $u$:
\begin{equation}
    du \coloneqq \sum_{i = 0}^{2N} \frac{\partial u}{\partial w_i} dw_i
\end{equation}
Similarly to the $g(\boldsymbol{z})$--$u(\boldsymbol{w})$ equivalence, we consider the equivalence between $\boldsymbol{\gamma}(t)$ and $\boldsymbol{a}(t) = [\boldsymbol{n}(t), \boldsymbol{m}(t)] \in \mathbb{R}^{2n}$. Then the pullback of $du$ by $\boldsymbol{a}$ is :
\begin{equation}
    \boldsymbol{a}^{*}du \coloneqq \sum_{i = 0}^{2N} \frac{\partial u}{\partial w_i} a_i^{\prime}(t) dt
\end{equation}
Denoting with $a_i^{\prime}$ the i-th element of $d\boldsymbol{a} / dt$. The line integral of $u$ along the line defined by $\boldsymbol{a}$ is:
\begin{align*}
    \int_{\gamma} du = \int_\gamma \boldsymbol{a}^{*} du &= \int_0^1 \sum_{i = 0}^{2N} \frac{\partial u}{\partial w_i} a_i^{\prime}(t) dt \\ &= \sum_{i = 0}^{2N} \int_0^1 \frac{\partial u}{\partial w_i} a_i^{\prime}(t) dt
\end{align*} Due to the equivalence between $\boldsymbol{w}$ and $\boldsymbol{p},\boldsymbol{q}$, and $u$ and $g$, the latter sum can be formulated as :
\begin{align}
\label{eq:line_integral_expansion}
    \int_{\gamma} du &= \sum_{i = 0}^{N} \left(\int_0^1 \frac{\partial g}{\partial p_i} n_i^{\prime}(t) dt +  \int_0^1 \frac{\partial g}{\partial q_i} m_i^{\prime}(t) dt \right) \\& = \sum_{i = 0}^{N} \int_0^1 \left( \frac{\partial g}{\partial p_i} n_i^{\prime}(t) + \frac{\partial g}{\partial q_i} m_i^{\prime}(t)\right) dt
\end{align}, which concludes the derivation.
\end{proof}

\section{Derivation of Definition \ref{def:cross_domain_ig}}
\label{sec:derivation_of_ig_definition}
From Lemma \ref{lem:cross_domain_ig_lemma}, we conclude with Definition \ref{def:cross_domain_ig} by considering $g(\boldsymbol{z})= f\left(T^{-1}(\boldsymbol{z})\right)$ and the complex differential form \cite{range1998holomorphic}:

\begin{equation}
    dg = \partial g + \overline{\partial}g
\end{equation} with $\partial g = \sum \partial g / \partial{z_i} dz_i$, $\overline{\partial} g = \sum \partial f / \partial{\overline{z_i}} \overline{dz_i}$. The complex partial derivatives are defined as \cite{range1998holomorphic} $\partial/ \partial{z_i} = 1/2 (\partial / \partial p - j\partial / \partial q)$ and $\partial/ \partial{\overline{z_i}} = 1/2 (\partial / \partial p + j\partial / \partial q)$. Then the pullback of $dg$ by $\boldsymbol{\gamma}$ is :
\begin{equation}
    \boldsymbol{\gamma}^{*}dg = \sum \frac{\partial g}{\partial{z_i}} \gamma_i^{\prime}(t)dt + \sum \frac{\partial g}{\partial{\overline{z_i}}} \overline{\gamma_i^{\prime}(t)}dt
\end{equation} Since $g \in \mathbb{R}$, $\partial g / \partial \overline{\boldsymbol{z}} = \overline{(\partial g / \partial \boldsymbol{z})}$; thus:
\begin{equation}
    \boldsymbol{\gamma}^{*}dg = 2 \Re{\sum \frac{\partial g}{\partial{z_i}}\gamma_i^{\prime}(t)dt}
\end{equation} Expanding the product into its real and imaginary parts produces the same form as eq. \ref{eq:line_integral_expansion}:
\begin{align*}
    \boldsymbol{\gamma}^{*}dg &= 2 \Re{\sum \frac{1}{2}\left(\frac{\partial g}{\partial p_i} - j \frac{\partial g}{\partial q_i}\right) \left( n_i^{\prime} + j m_i^{\prime}(t)\right)dt} \\&= \sum \left( \frac{\partial g}{ \partial p_i} n_i^{\prime}(t) + \frac{\partial g}{\partial q_i} m_i^{\prime}(t) \right)
\end{align*} Therefore, the complex integrated gradient definition can be rewritten as:
\begin{equation}
    IG^{\mathbb{C}^n}_i = 2 \int_0^1 \Re{\frac{\partial g}{\partial{z_i}}\gamma_i^{\prime}(t)}dt
\end{equation}

\section{Relationship between frequency-domain IG and frequency response}
\label{sec:relationship_between_frequency_domain_IG_and_frequency_response}

We probe the two convolutional channels of section \ref{sec:time_domain_explanation_limitations} with sinusoidal signals at varying frequencies, $\xi_i$:
\begin{equation}
    x_i(t) = cos(2 \pi \xi_i t + \phi)
\end{equation}
For each input, we perform frequency-domain IG, which yields a saliency map described by eq. \ref{eq:frequency_ig_relation_to_freq_response}. We aggregate all produced IGs and compare them to each filter's frequency response:
\begin{equation}
    b_i = \|\sum_n w_n e^{-2\pi \xi_in}\|
\end{equation}
The results are presented in Figure \ref{fig:freq_response_ig}.

\begin{figure}[h]
    \centering
    \includegraphics[scale=0.9]{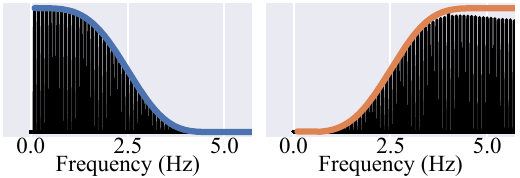}
    \caption{Frequency response (blue - orange) and frequency integrated gradients (black) for the two channels of the model of Section~\ref{sec:time_domain_explanation_limitations}. We probe the model, performing frequency IG on samples with varying base frequencies.}
    \label{fig:freq_response_ig}
\end{figure}

\section{Relation to Virtual Inspection Layers}
\label{sec:relation_to_virtual_inspection_layers}
We demonstrate here the equivalence between Eq. \ref{eq:integrated_gradients_complex_domain_dif_complx} and the Virtual Inspection Layer \cite{vielhaben2024explainable} for the case of the Discrete Fourier Transform (DFT) domain saliency maps.

Denote the DFT transform $\boldsymbol{z} = T \boldsymbol{x}$ with :
\begin{equation}
    T^{-1}_{nk} = \frac{1}{\sqrt{N}} e^{2 \pi kn / N}
\end{equation}
Thus, from eq.\ref{eq:integrated_gradients_complex_domain_dif_complx}
\begin{align*}
    IG^{DFT}_k &= 2 \int_0^1 \Re{\sum_{n = 0}^{N-1}\frac{\partial f}{\partial x_n}T^{-1}_{nk}(z_k - \hat{z_k})}dt \\
    & = \sum_{n = 0}^{N-1}\Re{T^{-1}_{nk}(z_k - \hat{z_k})} 2 \int_0^1 \frac{\partial f}{\partial x_n}dt \\
    &= 2\sum_{n = 0}^{N-1}\Re{T^{-1}_{nk}(z_k - \hat{z_k})} \frac{IG_n}{x_n - \hat{x}_n} 
\end{align*}
Denoting $(z_k - \hat{z_k}) = r_k e^{j \phi k}$ then
\begin{equation}
    \Re{T^{-1}_{nk}(z_k - \hat{z_k})} = \frac{r_n}{\sqrt{N}}cos\left(\frac{2 \pi k n}{N} + \phi_k\right)
\end{equation}
And finally, 
\begin{equation}
    R_k = 2 r_k \sum cos\left(\frac{2 \pi k n}{N} + \phi_k\right) \frac{R_n}{x_n - \hat{x}_n}
\end{equation}
Which is equivalent to the method of \cite{vielhaben2024explainable}.

As an example, we present (Figure \ref{fig:ppgCDIG_vs_VIL}) the frequency attributions of Figure \ref{fig:experiments_ppg}, comparing the Cross-Domain IG in the frequency domain with the Virtual Inspection Layer on top of the time-domain IG.

\begin{figure}[h]
    \centering
    \includegraphics[scale=0.75]{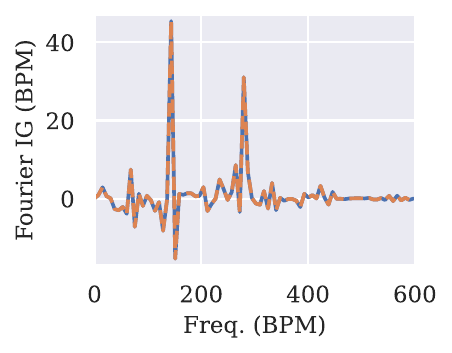}
    \caption{Example of Cross-Domain IG in the frequency domain compared to the Virtual Inspection Layer operating on the time-domain IG. }
    \label{fig:ppgCDIG_vs_VIL}
\end{figure}

\section{Relation to FreqRISE}
\label{sec:relation_to_freqrise}
We experimentally compared Cross-domain IG (frequency and time-frequncy domains) to FreqRISE \cite{brusch2024freqrise}. We run their benchmarking on the AudioMNIST dataset \cite{becker2024audiomnist}, evaluating Faithfulness and Complexity as defined in \cite{brusch2024freqrise}. 

Additionally, we evaluated Cross-domain IG in the Complex Spectrum and Discrete Wavelet (DWT) domains. For the Complex Spectrum the transform $T$ is defined as:
\begin{equation}
    T = \mathcal{F}^{-1}\{log(\mathcal{F}\{x(t)\})\}
\end{equation}
$\mathcal{F}$ is the Fourier transform. For the DWT we used the Haar wavelet decomposition.
The results are presented in Table \ref{tab:comparison_with_freqrise}.

\begin{table*}[h]
    \centering
    \begin{tabular}{l|c|c|c|c|c|c|c|c}
        \toprule
        & \multicolumn{2}{c|}{Frequency} & \multicolumn{2}{c|}{Time-Frequency} & \multicolumn{2}{c|}{DWT} & \multicolumn{2}{c}{Complex Spectrum}\\
        \midrule
        & \textbf{Digit} & \textbf{Gender} & \textbf{Digit} & \textbf{Gender} & \textbf{Digit} & \textbf{Gender} & \textbf{Digit} & \textbf{Gender}\\
        \midrule
        \multicolumn{9}{c}{Faithfulness $\downarrow$}\\
        \midrule
        \textbf{FreqRISE} & 0.160 & 0.416 & 0.104 & 0.423& - &- & - &- \\
        \textbf{LRP} & 0.205 & 0.431 & 0.214 & 0.420 & - & -& - & - \\ 
        \textbf{IG} & 0.252 & 0.428 & 0.197 & 0.389 & - & - & - & - \\ 
        \textbf{CDIG (ours)} & 0.19 & 0.446 & 0.099& 0.429 & 0.254 & 0.603 & 0.097 & 0.505 \\ 
        \midrule
        \multicolumn{9}{c}{Complexity $\downarrow$}\\
        \midrule
        \textbf{FreqRISE} & 8.17 & 8.01 & 10.82 & 10.78 & - &-& - &-  \\
        \textbf{LRP} & 5.84 & 5.16 & 4.67 & 4.16 & - & -& - & - \\ 
        \textbf{IG} & 6.41 & 4.74 & 5.26 & 4.04 & - & -& - & - \\ 
        \textbf{CDIG (ours)} & 6.31 & 4.609 & 5.143 & 3.777 & 1.300 & 1.309 & 4.219 & 4.063\\ 
    \end{tabular}
    \caption{Comparison of Cross-domain IG with FreqRISE. LRP and IG refer to the use of a Virtual Inspection Layer \cite{vielhaben2024explainable} on top of the time-domain LRP and IG, respectively. The faithfulness and complexity scores for FreqRISE, LRP and IG are taken from \cite{brusch2024freqrise}. Since FreqRISE, LRP and IG are instantiated only in the frequency and time-frequency domains, they cannot provide saliency maps in the Complex Cepstrum domain. We have also added evaluations of Cross-Domain IG in the Discreate Wavelet Transform.}
    \label{tab:comparison_with_freqrise}
\end{table*}

\section{Feature-level Insertion-Deletion}
\label{sec:feature_level_insertion_deletion}
We perform insertion-deletion evaluation tests on the three examples presented in Section \ref{sec:applications}. Our evaluation indicates that component-level attributions provide more faithful and concentrated evidence for the models' predictions than time-domain attributions: adding top-rated component features rapidly reconstructs the output, while removing them destroys it. 

\subsection{Heart rate extraction from physiological signals}

We follow the procedure outlined below:
\begin{enumerate}
    \item \textbf{Select $k\%$ features}, either in the time or frequency domain. For the frequency and time domain IG, we select the $k$ components with the highest IG score. For the random intervention, we randomly sample $k\%$ unique frequency bins.
    \item \textbf{Insert/delete} $k$ components to generate modified samples $\boldsymbol{x}_{mod}$.
    \item \textbf{Infer} heart rate with $\boldsymbol{x}_{mod}$ input.
    \item \textbf{Compare} $f(\boldsymbol{x}_{mod})$ with the original heart rate inference before any interventions  $f(\boldsymbol{x})$.
\end{enumerate}
 
An example of inference after inserting/deleting input features is presented in Figure \ref{fig:ppg_insertion_deletion_example}. We plot the heart rate inference throughout the entire 2-hour session of subject 15 from the PPG-Dalia dataset. The results for the entire PPGDalia dataset are summarised in Table \ref{tab:ppg_insertion_deletion_auc}.

\begin{table}[h]
    \centering
    \begin{tabular}{c|c|c|c}
        \toprule
        \textbf{Top k\%-features} & \textbf{3.125 $\%$} & \textbf{25$\%$} & \textbf{50$\%$} \\
        \midrule
        \multicolumn{4}{c}{Deletion $\uparrow$}\\
        \midrule
        \textbf{Frequency IG} & 66.39 & 133.56 & 127.13 \\
        \textbf{Time IG} & 10.13 & 50.86 & 104.84 \\ 
        \textbf{Random} & 8.53 & 37.03 & 68.34 \\ 
        \midrule
        \multicolumn{4}{c}{Insertion $\downarrow$}\\
        \midrule
        \textbf{Frequency IG} & 37.98 & 20.08 & 9.86\\
        \textbf{Time IG} & 94.58 & 57.27 & 58.61 \\ 
        \textbf{Random} & 123.71 & 100.39 & 66.67 \\ 
    \end{tabular}
    \caption{Insertion-deletion evaluation dropping the k\% most important features. Deletion/Insertion distance (expressed in Beats per Minute- BPM) from the original HR inference averaged across 15 subjects of PPGDalia.}
    \label{tab:ppg_insertion_deletion_auc}
\end{table}

\begin{figure*}[h]
    \centering
    \includegraphics[scale=0.8]{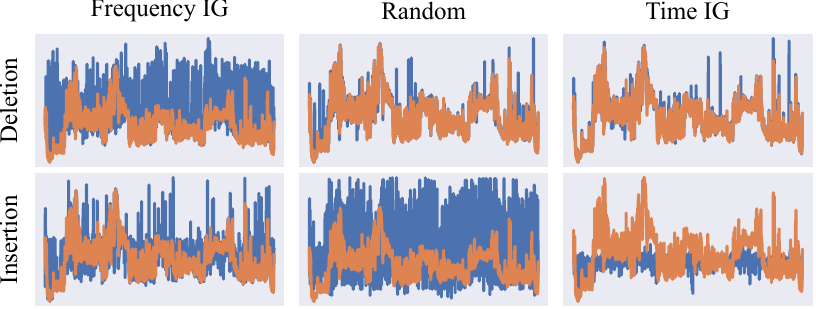}
    \caption{\textbf{Example of heart rate inference after deleting features.} We plot the entire session of subject 15 from PPGDalia. For each insertion/deletion, we retain/delete 3.125\% of the input features. For the Fourier and time IG these are the frequency bins and time-points with the highest assigned IG score. In the random case, we randomly drop 3.125\% of the frequency bins. We plot the \textbf{\textcolor{sns_orange}{original HR inference}} over the duration of the session and the model's output after \textbf{\textcolor{sns_blue}{modifying the input}} accordingly.}
    \label{fig:ppg_insertion_deletion_example}
\end{figure*}

\subsection{Electroencephalography-based epileptic seizure detection}

We used the Physionet Siena Scalp EEG Database v1.0.0~\cite{detti_siena_2020, detti_eeg_2020, goldberger_physiobank_2000}. For each subject's sessions, we retrieved the first sample that is detected as a seizure by the \texttt{zhu-transformer}. For each sample, we generated ICA-domain IG saliency maps and performed insertions/deletions with the most important IC. We kept track of the change in the seizure classification probability, $\Delta p = p(\boldsymbol{x}_{mod}) - p(\boldsymbol{x})$, as we:
\begin{enumerate}
    \item Delete the most important component and perform inference, 
    \item Maintain the most important component, delete the rest of the components, and perform seizure classification.
\end{enumerate}

We compared these results with those obtained from randomly choosing an IC component and performing the same insertion/deletion evaluation. 

\begin{table}[h]
    \centering
    \begin{tabular}{c|c|c}
        \toprule
        & \textbf{ICA IG} & \textbf{Random} \\
        \midrule
        \textbf{Deletion $\Delta p$} $\uparrow$ & 0.1776 & 0.0083 \\
        \textbf{Insertion $\Delta p$} $\downarrow$ & 0.0696 & 0.4396 \\ 
    \end{tabular}
    \caption{Insertion-deletion evaluation on the seizure detection model.}
    \label{tab:eeg_insertion_deletion}
\end{table}

\section{Example time-domain attributions}
\label{sec:example_time_domain_attributions}

Figures \ref{fig:ppg_time_domain_ig}, \ref{fig:seizure_detection_ig_time_domain} and \ref{fig:timeseries_forecasting_time_ig} present the time-domain attributions from the examples of Section \ref{sec:applications}. In all three cases, interpreting the time-domain saliency maps is difficult and of limited utility. 

\textbf{Heart rate inference.} The time-domain IG highlights individual time-points of the PPG input. However, it is difficult to assess:
\begin{enumerate}
    \item \textit{Does an individual time-point contribute to the heart or interference components?} In the time domain, both the effect of the heart and the interference are mixed, and each time point contains information from both of these components. In contrast, in images, when there is component (object) overlap, one component blocks the other, and a single pixel carries single-component information.
    \item \textit{Which time-points should be the most important/influential?} From domain knowledge we know that oscillations around the ground truth heart rate should be the ones affecting the model's output. However, we do not have any such insights in the time domain, and the component overlap further complicates oscillation identification in time. 
\end{enumerate}
Consequently, these saliency maps do not allow us to answer the interpretability task of Section \ref{sec:heart_rate_extraction_from_physiological_signals}. 

\textbf{Seizure detection.} Similarly to the heart rate example, it is not easy to visually identify the seizure-related oscillations in the time-domain saliency map. 

\textbf{Time series forecasting.} The time-domain IG highlights mostly the last input time-points. 

\begin{figure*}[h]
    \centering
    \includegraphics[scale=0.8]{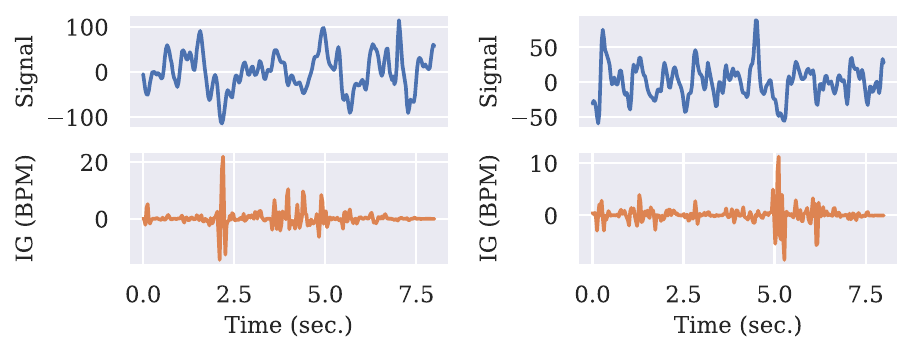}
    \caption{\textbf{Time-domain IG for HR inference}. We present the same two inputs as in Figure \ref{fig:experiments_ppg}. For each time point in the input we assign a significance value. \textbf{Top:} Raw time-domain input which is processed by the model. \textbf{Bottom:} IG saliency map expressed in the original time domain.}
    \label{fig:ppg_time_domain_ig}
\end{figure*}

\begin{figure*}[h]
    \centering
    \includegraphics[scale=0.8]{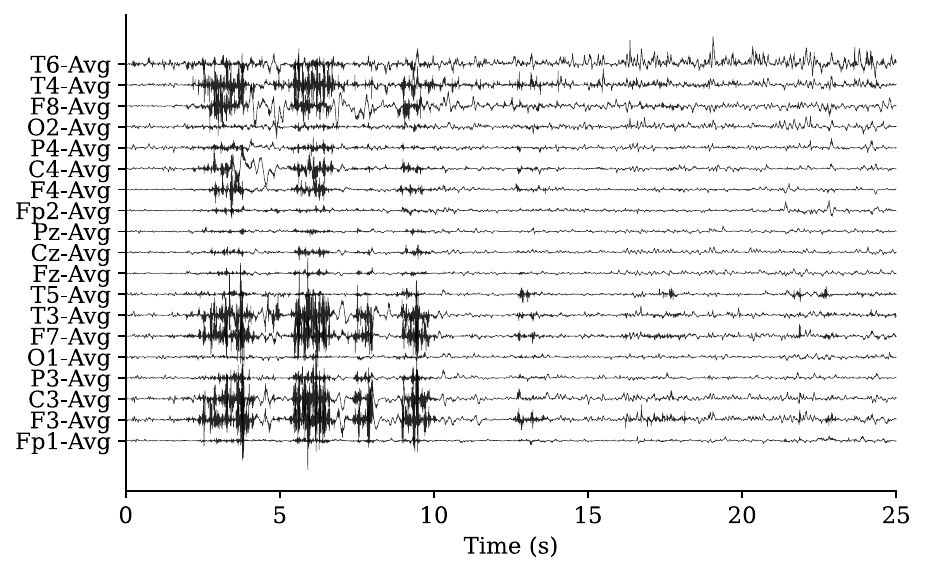}
    \caption{\textbf{Time-domain IG for seizure classification}. For each time point on each channel we assign a significance value.}
    \label{fig:seizure_detection_ig_time_domain}
\end{figure*}

\begin{figure}[h]
    \centering
    \includegraphics[scale=0.8]{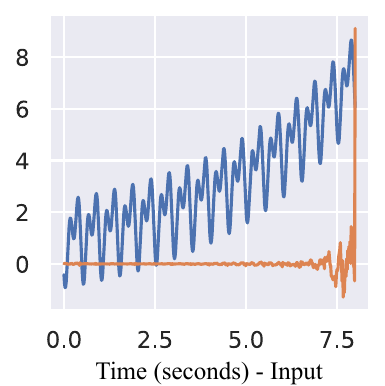}
    \caption{\textbf{Time-domain IG for time-series forecasting}. We plot the raw time-domain \textbf{\textcolor{sns_blue}{input}} along with the \textbf{\textcolor{sns_orange}{IG importance}} for each time-point in the input.}
    \label{fig:timeseries_forecasting_time_ig}
\end{figure}

\section{Additional examples}
\label{sec:additional_examples}
We present additional Cross-domain IG examples in Figures \ref{fig:ppg_additional_examples}, \ref{fig:seizure_detection_ica_ig_additional_example} and \ref{fig:more_forecasting_examples}.

\begin{figure*}[h]
    \centering
    \includegraphics[scale=0.7]{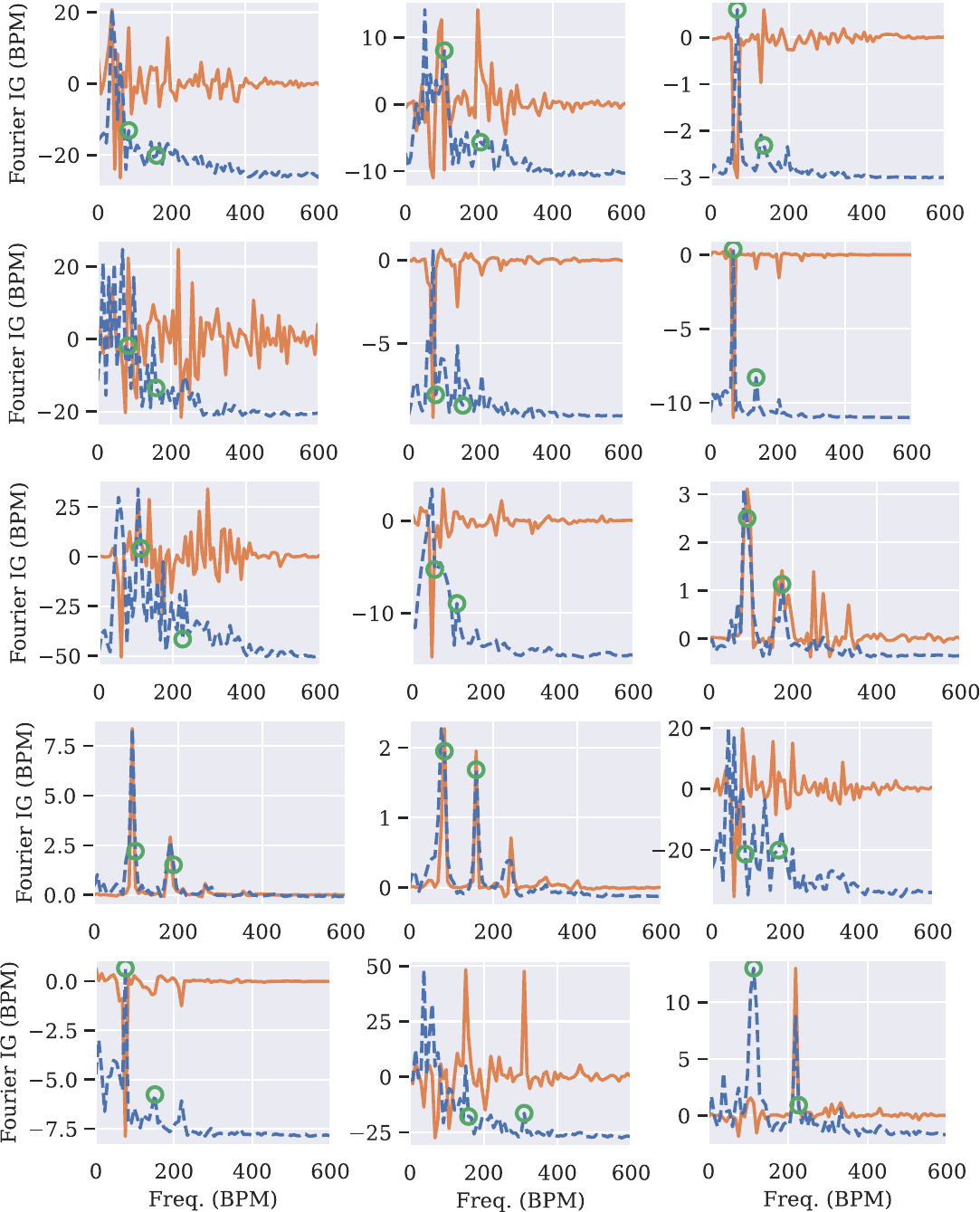}
    \caption{\textbf{Frequency-domain IG for heart rate inference model.}}
    \label{fig:ppg_additional_examples}
\end{figure*}

\begin{figure*}[h]
    \centering
    \includegraphics[scale=1.0]{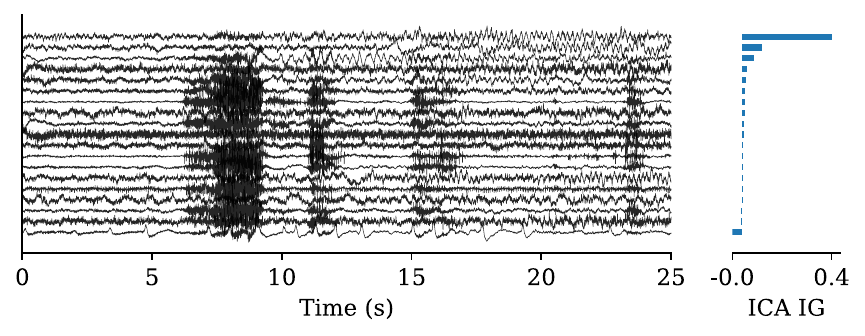}
    \caption{\textbf{ICA-domain IG for seizure detection model.} Similarly to the example presented in Section \ref{sec:electroencephalography_based_epileptic_seizure_detection}, the first channel contains the majority of the seizure components. IC channels that contain mostly interference are assigned a very small IG score.}
    \label{fig:seizure_detection_ica_ig_additional_example}
\end{figure*}

\begin{figure*}[h]
    \centering
    \includegraphics[scale=0.75]{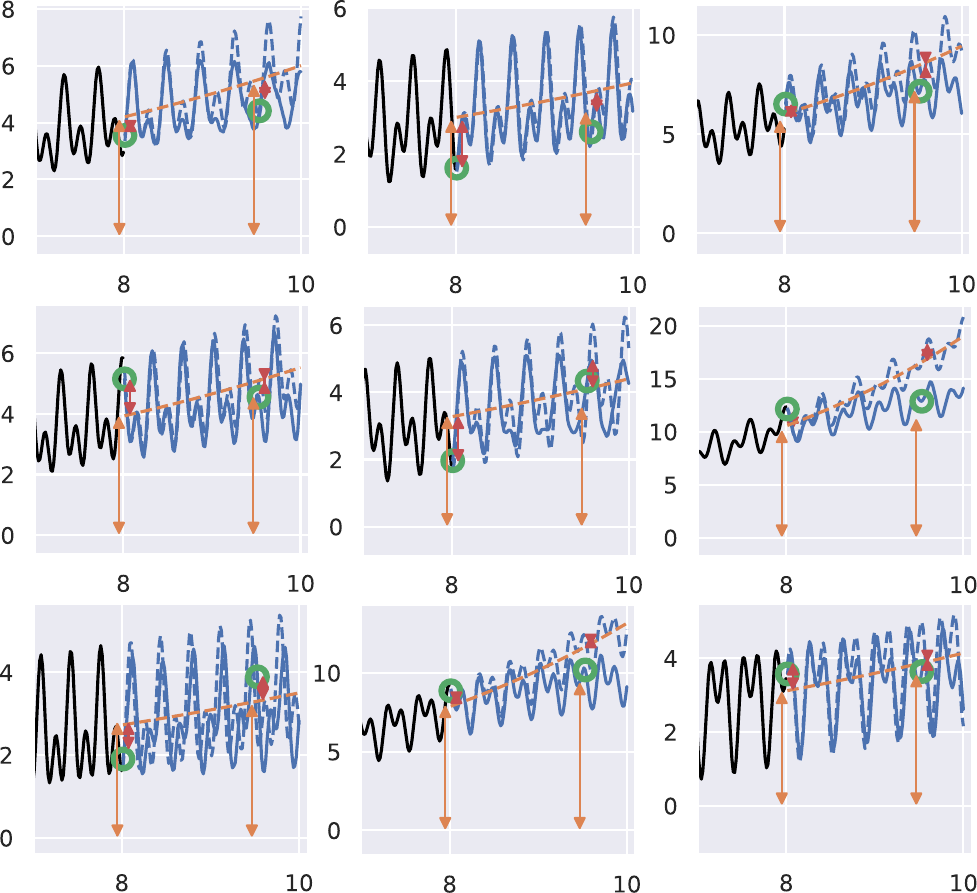}
    \caption{\textbf{\textcolor{sns_orange}{Seasonal}-\textcolor{sns_red}{Trend} IG for TimesFM forecasts}. We generate synthetic samples by sampling them as described in Appendix \ref{sec:generated_time_series_for_timesfm_forecasting}.}
    \label{fig:more_forecasting_examples}
\end{figure*}

\section{EEG and ICA}

The raw EEG input is presented in Figure \ref{fig:eeg_channels}.

The implementation of the \texttt{zhu-transformer} we used can be found here \url{https://github.com/esl-epfl/zhu_2023}.

The application of ICA in EEG signals is based on the general assumption that the EEG data matrix $X \in \mathbb{R}^{N \times M}$ is a linear mixture of different sources (activities) $S \in \mathbb{R}^{N \times M}$ with a mixing matrix $A \in \mathbb{R}^{N \times N}$ such that $X = AS$, where $N$ is both the number of sources and EEG channels, and $M$ is the number of samples in the dataset. Sources are assumed to be statistically independent and stationary. These assumptions can be leveraged to compute an inverse unmixing matrix $W = A^{-1} (\in \mathbb{R}^{N \times N})$, such that $S = WX$. Finding $W$ is an ill-posed problem without an analytical solution, which can be estimated by means of different ICA algorithms~\cite{hyvarinen_independent_2001, klug_identifying_2021}. ICA is used in EEG to decompose the signal into independent components that separate the signal of interest from various sources of artifacts~\cite{winkler_automatic_2011}. In this work, for ICA we selected the FastICA algorithm implemented in \texttt{sklearn} (\texttt{max\_iter} = $3\cdot 10^{4}$, \texttt{tol} = $1\cdot 10^{-8}$). 

The independent channels estimated using ICA are presented in Figure \ref{fig:ica_decomposition_channels}. 

\begin{figure*}[h]
    \centering
    \includegraphics[scale=0.5]{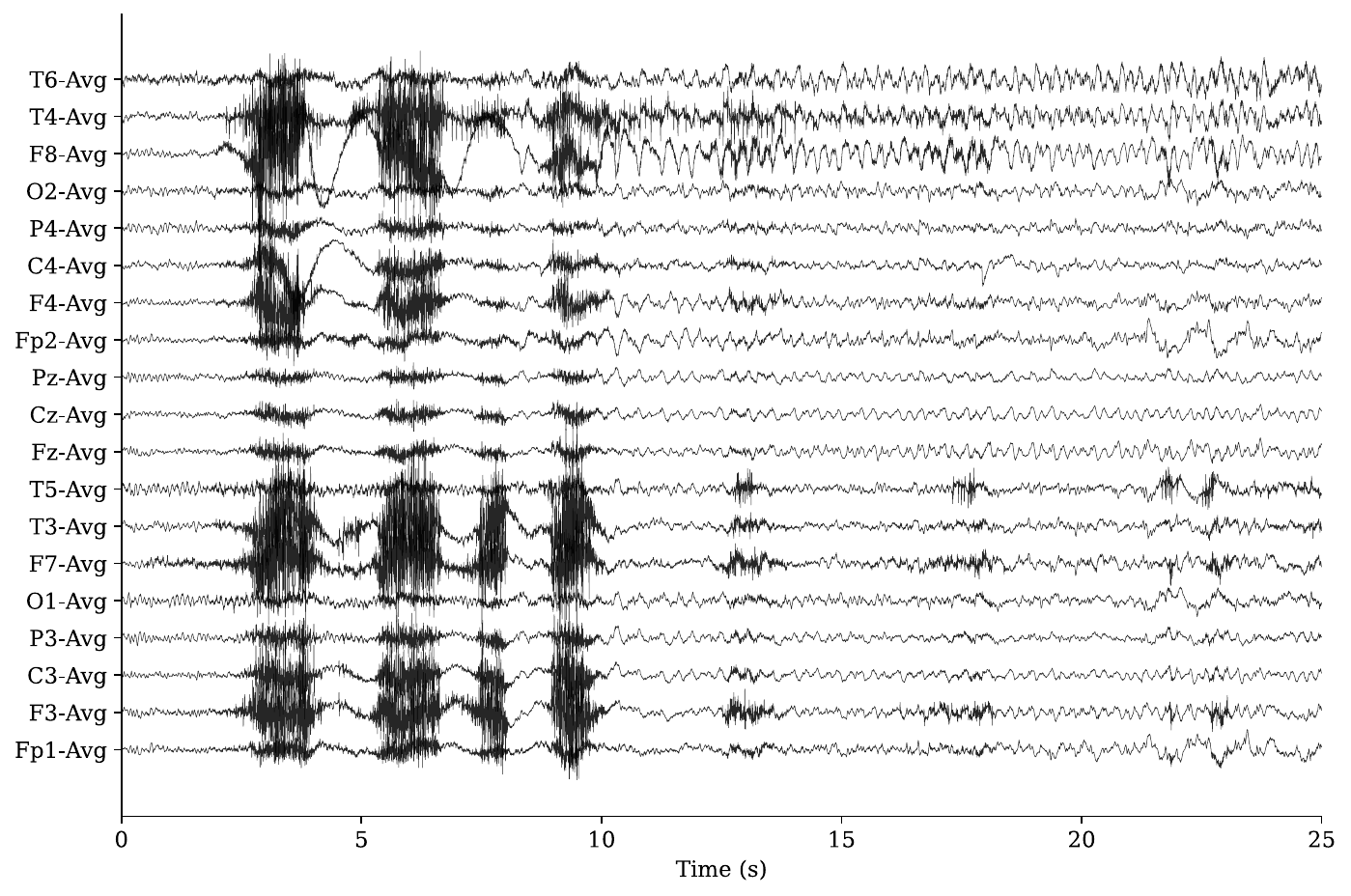}
    \caption{EEG signal in the original channel space.}
    \label{fig:eeg_channels}
\end{figure*}

\begin{figure*}[h]
    \centering
    \includegraphics[scale=0.5]{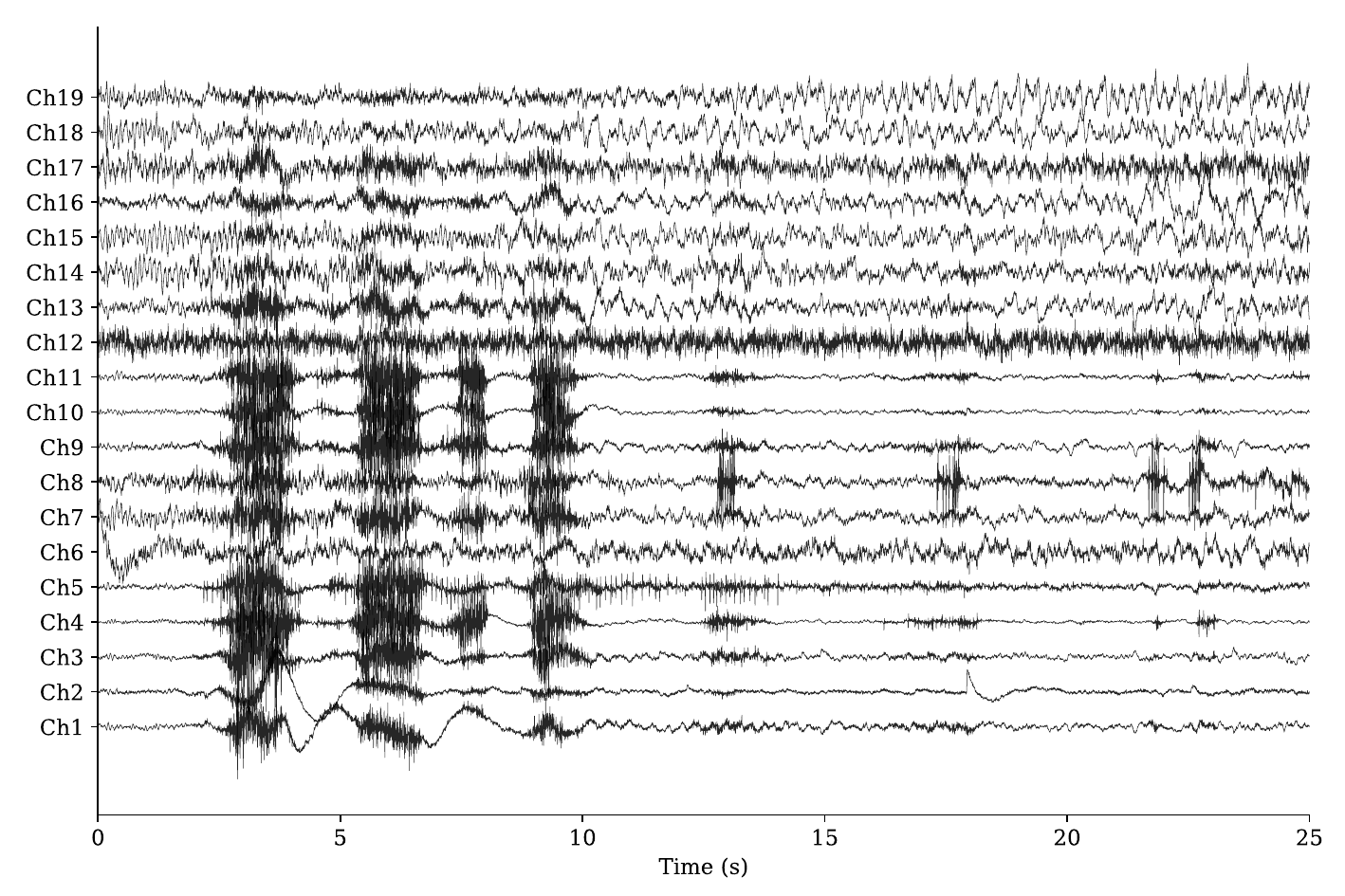}
    \caption{EEG signal in the Independent Component space.}
    \label{fig:ica_decomposition_channels}
\end{figure*}

\section{Generated time series for TimesFM forecasting}
\label{sec:generated_time_series_for_timesfm_forecasting}
We generate a synthetic time series signal, $x(t)$, composed of an exponential trend, $x_{trend}(t)$, and a seasonal component, $x_{seasonal}(t)$:
\begin{align*}
    x_{trend}(t) &= e^{\frac{t}{\alpha}} \\
    x_{seasonal}(t) &= sin(2\pi \cdot \xi \cdot t + \phi) + sin(2\pi \cdot 2\xi \cdot t + \phi) \\
    x(t) &= x_{trend}(t) + x_{seasonal}(t) 
\end{align*} For the example in Section \ref{sec:foundation_model_time_series_forecasting} $\alpha = 4,\ \xi = 2Hz$. For the samples presented in Appendix \ref{sec:additional_examples} they were randomly sampled from $\alpha \sim U(4.0, 7.0)$ and $\xi \sim U(3.0, 8.0)[Hz]$. A window of 512 time points, starting at $t = 0$, are given as input to TimesFM which generates forecasts up to 128 time points in the future from $t = 512$. The input time series and STL decomposition are presented in more detail in Figure \ref{fig:appendix_timeseries_decomposition}.

\begin{figure}[h]
    \centering
    \includegraphics[scale=0.9]{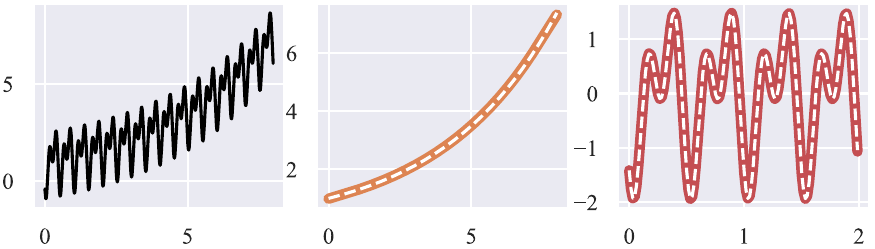}
    \caption{\textbf{Input time series for forecasting and successful STL decomposition.} \textbf{Left:} time series with a \textbf{\textcolor{sns_orange}{trend}} and a \textbf{\textcolor{sns_red}{seasonal}} component. \textbf{Center:} The decomposed \textbf{\textcolor{sns_orange}{trend}} component and ground truth trend (white dashed line). \textbf{Right:} The decomposed \textbf{\textcolor{sns_red}{seasonal}} component and ground truth seasonality (white dashed line).}
    \label{fig:appendix_timeseries_decomposition}
\end{figure}

\section{Limitations}
\label{sec:limitations}
In this work, we have addressed the limitations of IG regarding time-domain saliency maps. The rest of the original IG limitations are also transferred to our method. For example, the current implementation focuses on a linear integration path, reflecting the original IG. However, other non-linear paths, e.g., Guided IG \cite{kapishnikov2021guided}, should be explored. In our Remarks in Section \ref{sec:methods} we briefly note how already available solutions to these limitations could be transferred directly to Cross-domain IG. For clarity, we summarise them here:
\begin{enumerate}
    \item \textbf{Integration path.} In this work, we used a linear path in line with the original IG \cite{sundararajan2017axiomatic}. However, eq. \ref{eq:integrated_gradients_complex_domain_dif_complx} allows for the use of non-linear curves such as in \cite{yang2023idgi, kapishnikov2021guided}.
    \item \textbf{Choosing the baseline.} In \cite{sundararajan2017axiomatic} the authors argue that a baseline point exists for most deep networks. In cross-domain IG, if such a point exists, then it can be trivially defined in the target domain through the transform $T$.
    \item \textbf{Computational overhead.} Similarly to the original IG, our method  requires multiple differentiations to approximate the integral (Definition \ref{def:cross_domain_ig}). We require an additional step due to the transform $T$: computing the inverse and the backpropagation over it.
\end{enumerate}

Our method requires an invertible, differentiable transform and a carefully selected baseline point. Consequently, we excluded non-invertible transforms, and further investigation is needed for approximately invertible cases. Baseline selection also plays a role in the final saliency map. We focused on the zero-signal as the baseline point; future work should include an extensive investigation into the effects of the baseline selection. Finally, multiple transforms can be combined to provide a multi-faceted saliency map, such as ICA combined with frequency domains, and automatic transform selection could help streamline the process. We leave \textit{ensemble} domains and automatic domain selection as future work.

\section{Experiments compute resources}
All experiments were run on an NVIDIA Tesla V100 with 32 GB of memory. 

\section{Use of LLMs} 
We used a large language model (LLM) for light copy-editing (grammar and wording) and minor coding assistance (e.g., debugging errors).


\end{document}